% This must be in the first 5 lines to tell arXiv to use pdfLaTeX, which is strongly recommended.
\pdfoutput=1
% In particular, the hyperref package requires pdfLaTeX in order to break URLs across lines.

\documentclass[11pt,a4paper]{article}
\usepackage{url}

% Remove the "review" option to generate the final version.
% \usepackage[review]{emnlp2021}
% \usepackage[]{tacl2018v2}
\usepackage[acceptedWithA]{tacl2021v1}

% Standard package includes
\usepackage{times}
\usepackage{latexsym}

% For proper rendering and hyphenation of words containing Latin characters (including in bib files)
\usepackage[T1]{fontenc}
% For Vietnamese characters
% \usepackage[T5]{fontenc}
% See https://www.latex-project.org/help/documentation/encguide.pdf for other character sets

% This assumes your files are encoded as UTF8
\usepackage[utf8]{inputenc}

% This is not strictly necessary, and may be commented out,
% but it will improve the layout of the manuscript,
% and will typically save some space.
\usepackage{microtype}
\usepackage{amsfonts}
\usepackage{amssymb}
\usepackage{amsthm}
\usepackage{amsmath}
\usepackage{bbm}
\usepackage{enumitem}
\usepackage{linguex}
\usepackage{multirow}
\usepackage{makecell}
\usepackage[font=small]{caption}

\usepackage{pifont}% http://ctan.org/pkg/pifont
\newcommand{\cmark}{\ding{51}}%
\newcommand{\xmark}{\ding{55}}%

% https://ctan.math.illinois.edu/macros/latex/contrib/thmtools/doc/thmtools-manual.pdf
\usepackage{thmtools}
%\usepackage{thm-restate}
% https://ctan.math.illinois.edu/macros/latex/contrib/apxproof/apxproof.pdf
%\usepackage{apxproof}

%\declaretheorem{proposition}

%\newtheorem{lemma}{Lemma}
\newtheorem{prop}{Proposition}

\theoremstyle{definition}

\usepackage{caption}
\usepackage{subcaption}

%Table
\usepackage{tabularx,booktabs}
\usepackage{multicol}
\usepackage{bigdelim, rotating}

\newcommand{\citeposs}[1]{\citeauthor{#1}'s (\citeyear{#1})}

\DeclareMathOperator*{\E}{\mathbb{E}}

\newcommand{\bh}{\mathbf{h}}

\newcommand{\br}{\mathbf{r}}
\newcommand{\bR}{\mathbf{R}}

\newcommand{\bm}{\boldsymbol{m}}
\newcommand{\bM}{\boldsymbol{M}}
\newcommand{\bff}{\boldsymbol{f}}
\newcommand{\bl}{\boldsymbol{\ell}}
\newcommand{\bL}{\boldsymbol{L}}
\newcommand{\bz}{\boldsymbol{z}}
\newcommand{\bF}{\boldsymbol{F}}
\newcommand{\bZ}{\boldsymbol{Z}}

\newcommand{\ate}{\psi_{\text{ATE}}}
\newcommand{\ace}{\psi_{\text{ACE}}}
\newcommand{\baseline}{\psi_{\text{na\"ive}}}

\newcommand{\paired}{\psi_{\text{paired}}}

\newcommand{\calV}{\mathcal{V}}

\newcommand{\doo}{\mathrm{do}}
\newcommand{\masc}{\textsc{msc}}
\newcommand{\fem}{\textsc{fem}}
\newcommand{\sing}{\textsc{sing}}
\newcommand{\plr}{\textsc{plur}}

\newcommand{\rep}{\mathtt{tgt}}

\newcommand{\true}{$\mathtt{true}$}
\newcommand{\false}{$\mathtt{false}$}
\newcommand{\node}{$\mathtt{node}$}
\newcommand{\parent}{$\mathtt{parent}$}
\newcommand{\istate}{$\mathtt{state}$}
\newcommand{\isFocusNoun}{$\mathtt{isFocusNoun}$}
\newcommand{\isNSubj}{$\mathtt{nsubj}$}
\newcommand{\cls}{$\mathtt{[CLS]}$ }

\newcommand{\bhp}{\mathbf{h'}}
\newcommand{\bhatpaired}{\widehat{\bh}_{\paired}}
\newcommand{\bhatbaseline}{\widehat{\bh}_{\baseline}}

%Reference
\usepackage{cleveref}
\crefname{section}{\S}{\S\S}
\Crefname{section}{\S}{\S\S}
\crefformat{section}{\S#2#1#3}
\crefname{figure}{Fig.}{Fig.}
\crefname{alg}{Alg.}{Alg.}
\crefname{line}{line}{lines}
\crefname{appendix}{App.}{}
\crefname{equation}{Eq.}{Eq.}
\crefname{table}{Table}{Tables}
\crefname{prop}{Proposition}{Propositions}

\usepackage{colortbl}
\usepackage{scalerel}

%Algorithms
\usepackage{algorithm}
\usepackage[noend]{algpseudocode}
\algnewcommand{\parState}[1]{\State%
    \parbox[t]{\dimexpr\linewidth-\algmargin}{\strut\hangindent=\algorithmicindent \hangafter=1 #1\strut}}
\algrenewcommand\algorithmicindent{1.0em}%

\newcommand{\rightcomment}[1]{{\color{gray} \(\triangleright\) {\footnotesize\textit{#1}}}}
\algrenewcommand{\algorithmiccomment}[1]{\hfill \rightcomment{#1}}  % redefines \Comment
\algnewcommand{\LineComment}[1]{\State \rightcomment{#1}}
\algnewcommand{\LinesComment}[1]{\State \rightcomment{\parbox[t]{\linewidth-\leftmargin-\widthof{\(\triangleright\) }}{#1}}}

\newcommand{\algorithmicfunc}[1]{\textbf{def} #1 :}
\algdef{SE}[FUNC]{Func}{EndFunc}[1]{\algorithmicfunc{#1}}{}
\makeatletter
\ifthenelse{\equal{\ALG@noend}{t}}%
  {\algtext*{EndFunc}}
  {}%
\makeatother

\makeatletter
\algrenewcommand\ALG@beginalgorithmic{\small}
\makeatother

\definecolor{mutedblue}{HTML}{5DADE2}
\definecolor{mutedred}{HTML}{F1948A}
\definecolor{pastelgreen}{HTML}{40B0A6}
\definecolor{pastelyellow}{HTML}{E1BE6A}
\definecolor{silver}{HTML}{99A3A4}

\newcommand{\normal}{\colorbox{pastelgreen!30}{$\mathtt{NORMAL}$} }
\newcommand{\dir}{\colorbox{silver!30}{$\mathtt{DIR}$} }
\newcommand{\indir}{\colorbox{pastelyellow!30}{$\mathtt{INDIR}$} }

\newcommand{\term}[1]{\textbf{#1}}
\newcommand{\defn}[1]{\textbf{#1}}

\usepackage{todonotes}
\makeatletter
\newcommand*\iftodonotes{\if@todonotes@disabled\expandafter\@secondoftwo\else\expandafter\@firstoftwo\fi}  % defines \iftodonotes{<true>}{<false>}, thanks to https://tex.stackexchange.com/questions/126559/conditional-based-on-packageoption
\makeatother

% Note that these macros accept optional arguments such as size=\small, bordercolor=red, and so on.  Capitalized versions are inline paragraphs instead of margin notes.
 % to mark stuff that you know is missing or wrong when you write the text
 % default note settings, used by macros below.

     % for other commenters: specify author name in first required arg

\usepackage[normalem]{ulem}
% \newcommand{\suggest}[2]{{\color{red}\sout{#1}}{\color{blue}#2}}

% If the title and author information does not fit in the area allocated, uncomment the following
%
% \setlength\titlebox{5cm}
%
% and set <dim> to something 5cm or larger.
\usepackage{inconsolata}
\title{Naturalistic Causal Probing for Morpho-Syntax}
%\title{\emph{El hombre embarazado} is feminine? The causal effect of grammatical gender in contextualized representations}

\author{
Afra Amini$^{1,2}$~\;~Tiago Pimentel$^3$~\;~Clara Meister$^1$~\;~Ryan Cotterell$^{1,2}$ \\
  $^1$ETH Z\"{u}rich~\;~$^2$ETH AI Center~\;~$^3$University of Cambridge \\
 \texttt{\href{mailto:afra.amini@inf.ethz.ch
}{afra.amini@inf.ethz.ch
}}~\;~\texttt{\href{mailto:tp472@cam.ac.uk}{tp472@cam.ac.uk}}\\ \texttt{\href{mailto:clara.meister@inf.ethz.ch}{clara.meister@inf.ethz.ch}}~\;~\texttt{\href{mailto:ryan.cotterell@inf.ethz.ch}{ryan.cotterell@inf.ethz.ch}}
}

\begin{document}
\maketitle
\begin{abstract}
Probing has become a go-to methodology for interpreting and analyzing deep neural models in natural language processing.
However, there is still a lack of understanding of the limitations and weaknesses of various types of probes. 
In this work, we suggest a strategy for input-level intervention on naturalistic sentences.
% in Spanish, which is a language with gender marking. 
Using our approach, we intervene on the morpho-syntactic features of a sentence, while keeping the rest of the sentence unchanged.
Such an intervention allows us to \emph{causally} probe pre-trained models.
We apply our naturalistic causal probing framework to analyze the effects of grammatical gender and number on contextualized representations extracted from three pre-trained models in Spanish: the multilingual versions of BERT, RoBERTa, and GPT-2.
Our experiments suggest that naturalistic interventions lead to stable estimates of the causal effects of various linguistic properties.
Moreover, our experiments demonstrate the importance of naturalistic causal probing when analyzing pre-trained models.

\vspace{1.5em}
\hspace{.5em}\includegraphics[width=1.25em,height=1.25em]{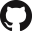}\hspace{.75em}\parbox{\dimexpr\linewidth-2\fboxsep-2\fboxrule}{\url{https://github.com/rycolab/naturalistic-causal-probing}}
\vspace{-.5em}
%Our code is available in: \url{}
% in determining if a particular property is encoded in representations.
\end{abstract}

\section{Introduction}

Contextualized word representations are a byproduct of pre-trained neural language models and have led to improvements in performance on a myriad of downstream natural language processing (NLP) tasks \citep{joshi-etal-2019-bert, kondratyuk-2019-cross, zellers2019hellaswag,brown2020language}. 
Despite this performance improvement, though, it is still not obvious to researchers how these representations encode linguistic information.
One prominent line of work attempts to shed light on this topic through \term{probing} \citep{alain2016understanding}, also referred to as auxiliary prediction \cite{adi2016fine} or diagnostic classification \cite{hupkes2018visualisation}.
%In short, the goal of probing is to test whether a model encodes certain linguistic properties in its intermediate representations. 
In machine learning parlance, a probe is a supervised classifier that is trained to predict a property of interest from the target model's representations. 
If the probe manages to predict the property with high accuracy, one may conclude that these representations encode information about the probed property.\looseness=-1

\begin{figure}
    \centering
    \includegraphics[width=\linewidth]{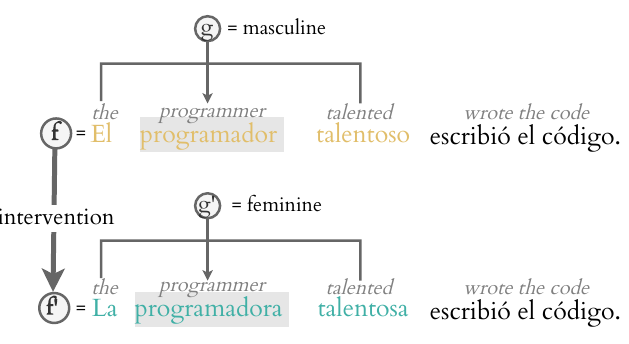}
    % \vspace{-20pt}
    \caption{Intervention on the gender of lemma \textit{programador} (masculine $\rightarrow$ feminine). Changes are propagated from that noun to its dependent words accordingly.\looseness=-1}
    % , changing it to \textit{programador\textbf{a}} (female programmer) and changing the dependent words (e.g. articles and adjectives) accordingly.}
    \vspace{-1em}
    \label{fig:causal-example}
\end{figure}

While widely used, probing is not without its limitations.\footnote{See \citet{belinkov2021probing} for an overview.}
For instance, probing a pre-trained model for grammatical gender can only tell us whether information about gender is \emph{present} in the representations,\footnote{See \citet{pimentel-etal-2020-information}, \citet{hewitt-etal-2021-conditional} and \citet{pimentel-cotterell-2021-bayesian} for fomalizations of this statement under information-theoretic frameworks.}
it cannot, however, tell us how or if the model actually uses information about gender in its predictions \citep{ravichander-etal-2021-probing, elazar2020bert, ravfogel-counter, lasri-etal-2022-probing}.
Furthermore, supervised probing cannot tell us whether the property under consideration is directly \emph{encoded} in the representations, or if it can be recovered from the representations alone due to spurious correlations among various linguistic properties.
%because information about the property may be present there simply due to its co-occurrence with some other property.
In other words, while through supervised probing techniques we might find \emph{correlations} between a probed property and representations, we cannot uncover \emph{causal} relationships between them.\looseness=-1

In this work, we propose a new strategy for input-level intervention on naturalistic data to obtain what we call \defn{naturalistic counterfactuals}, which we then use to perform causal probing. 
Through such input-level interventions, we can ascertain whether a particular linguistic property has a \emph{causal} effect on a model's representations.
A number of prior papers have attempted to tease apart causal dependencies using either input-level or representation-level interventions.
For instance, work on \defn{representational counterfactuals} has investigated causal dependencies via interventions on neural representations.
While quite versatile, representation-level interventions make it hard---if not impossible---to determine whether we are only intervening on our property of interest.
Another proposed method, \defn{templated counterfactuals}, \emph{does} perform an input-level intervention strategy, ensuring that only the probed property will be affected.
Under such an approach, the researcher first creates a number of templated sentences (either manually or automatically), which they then fill with a set of minimal-pair words to generate counterfactual examples.
%However, templated-based interventions are not always feasible since they require manual 
%focusing on input-level interventions guarantees that our intervention only affects the probed property of interest as we are directly manipulating the linguistic structure of the data.
However, template-based interventions are limited by design: They do not reflect the diversity of sentences present in natural language, and, thus, lead to \emph{biased} estimates of the measured causal effects.
% Naturalistic causal probing, on the other hand, allows for more realistic input-level interventions. 
Naturalistic counterfactuals improve upon template-based interventions in that they lead to \emph{unbiased} estimates of the causal effect.\looseness=-1

In our first set of experiments, we employ naturalistic causal probing to estimate the average treatment effect (ATE) of two morpho-syntactic features---namely, number and grammatical gender---on a noun's contextualized representation. We show the estimated ATE's stability across corpora.
In our second set of experiments, we find that a noun's grammatical gender and its number are encoded by a small number of directions in three pre-trained models' representations: BERT, RoBERTa, and GPT-2.\footnote{
We study the Spanish version of these models, if it exists, or the multilingual version if there is no Spanish version.}
% Specifically, we analyze these models' multilingual versions, as we detail in our experiments sections.
We further use naturalistic counterfactuals to causally investigate gender bias in RoBERTa. We find that RoBERTa is much more likely to predict the adjective \textit{hermoso(a)} (beautiful) for feminine nouns and \textit{racional} (rational) for masculine. 
This suggests RoBERTa is indeed gender biased in its adjective predictions. 

Finally, through our naturalistic counterfactuals, we show that correlational probes overestimate the presence of certain linguistic properties.
We compare the performance of correlational probes on two versions of our dataset: one unaltered and one augmented with naturalistic counterfactuals. 
While correlational probes achieve very high (above 90\%) performance when predicting gender from sentence-level representations, they only perform close to chance (around 60\%) on the augmented data.
Together, our results demonstrate the importance of a naturalistic causal approach to probing.\looseness=-1
% necessity of our strategy for generating naturalistic counterfactuals.\looseness=-1

\section{Probing} \label{sec:probes}

There are several types of probing methods that have been proposed for the analysis of NLP models, and there are many possible taxonomies of those methods.
For the purposes of this paper, we divide previously proposed probing models into two groups: correlational and causal probes.
On one hand, correlational probes attempt to uncover whether a probed property is \emph{present} in a model's representations. 
On the other hand, causal probes, roughly speaking, attempt to uncover how a model encodes and makes use of a specific probed property.
We compare and contrast correlational and causal probing techniques in this section.\looseness=-1

\subsection{Correlational Probing}
Correlational probing is any attempt to correlate the input representations with the probed property of interest.
Under correlational probing, the performance of a probe is viewed as the degree to which a model encodes information in its representations about some probed property \citep{alain2016understanding}. 
At various times, correlational results have been used to claim that language models have knowledge of various morphological, syntactic, and semantic phenomena \cite[][\emph{inter alia}]{adi2016fine,ettinger-etal-2016-probing,belinkov-etal-2017-evaluating,conneau-etal-2018-cram}. 
Yet the validity of these claims has been a subject of debate \citep{saphra-lopez-2019-understanding,hewitt-liang-2019-designing,pimentel-etal-2020-pareto,pimentel-etal-2020-information,voita-titov-2020-information}. 

\subsection{Causal Probing}
A more recent line of work aims to answer the question: What is the \emph{causal} relationship between the property of interest and the probed model's representations? 
In natural language, however, answering this question is not straightforward: sentences typically contain confounding factors that render analyses tedious.
To circumvent this problem, most work in causal probing relies on \textbf{interventions}, i.e., the act of setting a variable of interest to a fixed value \citep{pearl2009causal}.
Importantly, this must be done without altering any of this variable's causal parents, thereby keeping their probability distributions fixed.\footnote{Consider a set of three random variables with a causal structure $X \rightarrow Y \rightarrow Z$ (where $X$ causes $Y$, which causes Z). If we simply conditioned on $Y=1$, we would be left with the conditional distribution $p(x, z \mid Y=1) = p(x \mid Y=1)p(z \mid Y=1)$. If we perform an intervention on $Y=1$, on the other hand, we are left with a distribution of $p(x, z \mid \mathrm{do}(Y)=1) = p(x)p(z \mid Y=1)$; thus $X$'s distribution is not altered by $Y$.}
As a byproduct, these interventions generate \textbf{counterfactuals}, i.e., examples where a specific property of interest is changed while everything else is held constant.
Counterfactuals can then be used to perform a causal analysis.
Prior probing papers have proposed methods using both representational and templated counterfactuals, as we discuss next.\looseness=-1

\paragraph{Representational Counterfactuals.}
A few recent causal probing papers perform interventions directly on a model's representations \citep{giulianelli2018under,feder2020causalm,vig2020investigating,tucker2021if,ravfogel-counter,lasri-etal-2022-probing, ravfogel+al.icml22}.
For example, \citet{elazar2020bert} use iterative null space projection \citep[INLP;][]{ravfogel-etal-2020-null} to remove an analyzed property's information, e.g., part of speech, from the representations.
Although representational interventions can be applied to situations where other forms of intervention are not feasible,
it is often impossible to make sure only the information about the probed property is removed or changed.\footnote{There are, however, methods to mitigate this issue, e.g.,
\citet{ravfogel2022adversarial} recently proposed an improved (adversarial) method to remove information from a set of representations which greatly reduces the number of removed dimensions.}
In the absence of this guarantee, any causal conclusion should be viewed with caution.\looseness=-1

\paragraph{Templated Counterfactuals.}
Other works \citep{vig2020investigating,finlayson-etal-2021-causal}, like us, have leveraged input-level interventions.  
However, in these cases, the interventions are carried out using templated minimal-pair sentences, which differ only with respect to a single analyzed property. 
Using these minimal pairs, they estimate the effect of an input-level intervention on individual attention heads and neurons. 
One benefit of template-based approaches is that they create a highly controlled environment, which guarantees that the intervention is done correctly, and which may lead to insights that would be impossible to gain from natural data. 
However, since the templates are typically designed to analyze a specific property, they cover a narrow set linguistic phenomena, which may not reflect the complexity of language in naturalistic data.\looseness=-1

\paragraph{Naturalistic Counterfactuals.}
In this paper, following \citet{zmigrod-etal-2019-counterfactual}, we propose a new and less complex strategy to perform input-level interventions by creating naturalistic counterfactuals that are \emph{not} derived from templates.
Instead, we derive the counterfactuals from the dependency structure of the sentence.
By creating counterfactuals on the fly using a dependency parse, we avoid the biases of manually creating templates.
Furthermore, our approach guarantees that we only intervene on the specific linguistic property of interest, e.g., changing the grammatical gender or number of a noun.
\looseness=-1

\section{The Causal Framework} 

The question of interest in this paper is how contextualized representations are \emph{causally} affected by a morpho-syntactic feature such as gender or number.
To see how our method works, it is easiest to start with an example.
Let's consider the following pair of Spanish sentences:

{\small
\ex. \label{ex:msc} \textit{\textbf{El}} \textit{programador} \textit{talentos\textbf{o}} \textit{escribi{\'o}} \textit{el} \textit{c{\'o}digo}. \\ 
the.\textsc{m}.\textsc{sg}
programmer.\textsc{m}.\textsc{sg}
talented.\textsc{m}.\textsc{sg}
wrote the code. \\
% the.\textsc{m}.\textsc{sg}
% code.\textsc{m}.\textsc{sg}\\
The talented programmer wrote the code.

\ex. \label{ex:fem} \textit{\textbf{La}} \textit{programador\textbf{a}} \textit{talentos\textbf{a}} \textit{escribi{\'o}} \textit{el} \textit{c{\'o}digo}. \\ 
the.\textsc{f}.\textsc{sg}
programmer.\textsc{f}.\textsc{sg}
talented.\textsc{f}.\textsc{sg}
wrote the code. \\
% the.\textsc{m}.\textsc{sg}
% code.\textsc{m}.\textsc{sg}\\
The talented programmer wrote the code.

}

The meaning of these sentences is equivalent up to the gender of the noun \textit{programador}, whose feminine form is \textit{programador\textbf{a}}. However, more
than just this one word changes from \ref{ex:msc} to \ref{ex:fem}: The definite article \textbf{\textit{el}} changes to \textbf{\textit{la}} and the adjective \textit{talentos\textbf{o}} changes to \textit{talentos\textbf{a}}. 
In the terminology of this paper, we will refer to \textit{programador} as the \term{focus noun}, as it is the noun whose grammatical properties we are going to change.
We will refer to the changing of \ref{ex:msc} to \ref{ex:fem} as a \term{syntactic intervention} on the focus noun.
Informally, a syntactic intervention may be thought of as taking part in two steps. 
First, we swap the focus noun (\textit{programador}) with another noun that is equivalent up to a single grammatical property.
In this case, we swap \textit{programador} with \textit{programador\textbf{a}} which differs only in its gender marking.
Second, we reinflect the sentence so that all necessary words grammatically agree with the new focus noun.
The result of a syntactic intervention is a pair of sentences that differ minimally, i.e., only with respect to this one grammatical property. 
Another way of framing the syntactic intervention is as a counterfactual: What would \ref{ex:msc} have looked like if \textit{programador} had been feminine?
The rest of this section focuses on formalizing the notion of a syntactic intervention and discussing how to use them in a causal inference framework for probing.

\paragraph{A Note on Inanimate Nouns.} 
When estimating the effect of grammatical gender here, we restrict our investigation to \emph{animate} nouns, e.g., \textit{programador\textbf{a}}/\textit{programador} (feminine/masculine programmer).
Grammatical gender of inanimate nouns is lexicalized, meaning that each noun is assigned a single gender, e.g., \textit{puente} (bridge) is masculine. In other words, there is not a non-zero probability of assigning each lemmata to each gender, which violates a condition called \textbf{positivity} in causal inference literature. 
Thus, we cannot perform an intervention on the grammatical gender of those words, but rather would need to perform an intervention on the lemma itself. 
We refer to \citet{gonen-etal-2019-grammatical} for an analysis of the effect of gender on inanimate nouns' representations.
Note that a similar lexicalization can also be observed in a few animate nouns, e.g. \textit{madre/padre} (mother/father). 
In such cases, to separate the lemma from gender, we assume that these words share a hypothetical lemma, which in our example represents parenthood, and combining that with gender would give us the specific forms, e.g. \textit{madre}/\textit{padre}.\looseness=-1

\begin{figure*}[t]
    \centering
    \begin{subfigure}{0.49\textwidth}
    \includegraphics[width=\linewidth]{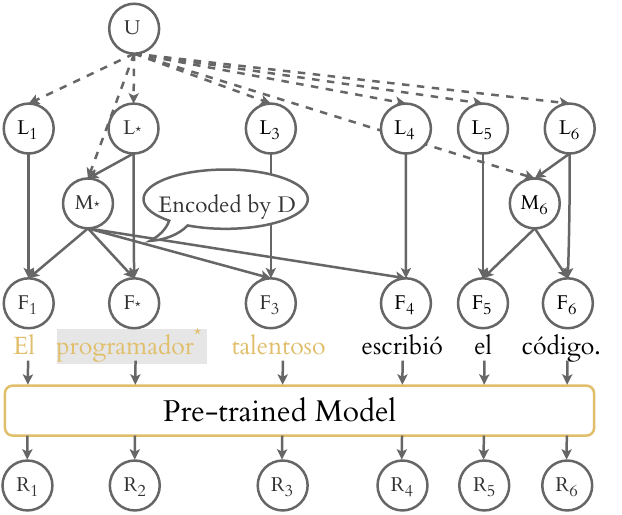}
    % \caption{}
    % \label{fig:detailed-causal-1}
    \end{subfigure}
    \begin{subfigure}{0.49\textwidth}
    \includegraphics[width=\linewidth]{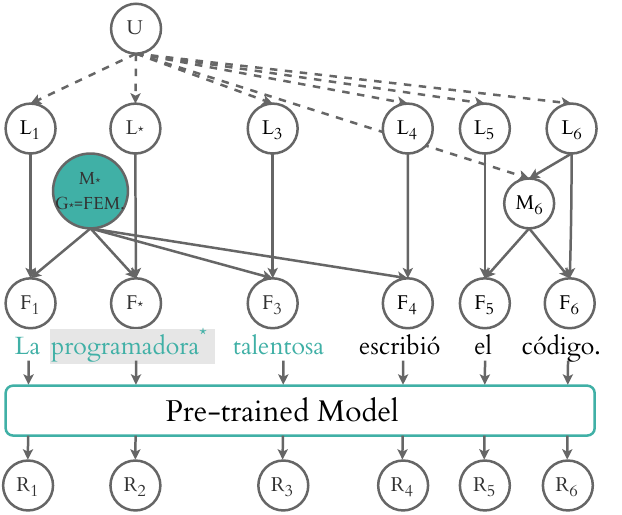}
    % \caption{}
    % \label{fig:detailed-causal-2}
    \end{subfigure}
    % \vspace{-5pt}
    \caption{Causal graph for the Spanish sentence \textit{\textbf{El} programador talentos\textbf{o} escribió el código.} before (on the left) and after (on the right) an intervention on the grammatical gender of the focus noun.}
    \label{fig:detailed-causal}
    \vspace{-10pt}
\end{figure*}

\subsection{The Causal Model} \label{sec:cm}

We now describe a causal model that will allow us to more formally discuss syntactic interventions.

\paragraph{Notation and Variables.}
We denote random variables in upper-case letters and instances with lower-case letters. 
We bold sequences: bold lower-case letters represent a sequence of words and bold upper-case letters represent a sequence of random variables.
Let $\bff = \langle f_1, \ldots, f_T\rangle$ be a sentence (of length $T$) where each $f_t$ is a word form. In addition, let $\br$ be the list of contextual representations $\br = \langle r_1, \ldots, r_T \rangle$ where each $r_t \in \mathbb{R}^h$, and is in one-to-one correspondence with the sentence $\bff$, i.e., $r_t$ is $f_t$'s contextual representation. 
Furthermore, let  $\bl = \langle \ell_1, \ldots, \ell_T\rangle$ be a list of lemmata and $\widetilde{\bm} = \langle m_1, \ldots, m_T \rangle$ a list of morpho-syntactic features co-indexed with $\bff$; $\ell_t$ is the lemma of $f_t$ and $m_t$ is its morpho-syntactic features. 
We call $\bm = \langle m_{t_1}, \ldots, m_{t_K} \rangle$ the \textbf{minimal list of morpho-syntactic features}, where each $t_k$ is an index between 1 to $T$. 
In essence, we drop features of the tokens that are dependent on other tokens' morphology. 
In our example \ref{ex:msc} this means we only include the morpho-syntactic features of \textit{programador} and \textit{c{\'o}digo}, thus $\bm = \langle m_2, m_6 \rangle$.\footnote{In this work, we only focus on two morpho-syntactic features: gender and number. To analyze other features, the minimal list should be expanded, e.g. to analyze verb tense, $m_3$ should be added to the list.} 
We denote the morpho-syntactic feature of interest as $m_*$, which, in this work, represents either the gender $g_*$ or number $n_*$ of the focus noun.
We further denote the lemma of the focus noun as $\ell_*$. \looseness=-1
% Each $m_t$ is a tuple of $(g, n)$ representing gender and number. 

\paragraph{Causal Assumptions.} 
Our causal model is introduced in \cref{fig:detailed-causal}.
It encodes the causal relationships between $U, \bL, \bM, \bF$ and $\bR$. Explicitly, we assume the following causal relationships:

\begin{itemize}[leftmargin=*]
    \item \emph{$\bM$ and $\bL$ are causally dependent on $U$}. The underlying meaning that the writer of a sentence wants to convey determines the used lemmas and morpho-syntactic features;
    \item In general, \emph{$L_t$ can causally affect $M_t$.} Take the gender of inanimate nouns as an example, where the lemma determines the gender;
    \item \emph{$\bF$ is causally dependent on $\bL$ and $\bM$}. Word forms are a combination of lemmata and morpho-syntactic features;
    \item \emph{$\bR$ is causally dependent on $\bF$}. Contextualized representations are obtained by processing the sentences through the probed model.
\end{itemize}

\paragraph{Dependency Trees.} 
In order to measure the causal effect of the gender of the focus noun ($g_*$) on the contextualized representation ($\br$), all of its causal dependencies must be considered. 
As our causal graph shows (in \cref{fig:detailed-causal}), $g_*$ not only has a causal effect on the focus noun's form, but also on the definite article \textbf{\textit{el}} and the adjective \textit{talentos\textbf{o}}. Yet, not all word forms in a sentence are affected; for instance, the definite article \textbf{\textit{el}} in the noun phrase \textit{el c{\'o}digo}.
Luckily, within a given sentence, such relationships are naturally encoded by that sentence's dependency tree. The dependency graph $d$ of a sentence $\bff$ is a directed graph created by connecting each word form $f_t$ for $1 \leq t \leq T$ to its syntactic parent. 
We use the information encoded in $d$ by leveraging the fact that a word form $f_t$ is causally dependent on its syntactic parent. 
In essence, a dependency tree $d$ implicitly encodes a function $d_t[\bm]$ which returns the subset of morphological properties that causally affect the form $f_t$. 
Thus, we are able to express the complete joint probability distribution of our causal model as follows:\looseness=-1
\begin{align}
    p(\bff, \bm&,\bl, u) \\
    &= p(u)\,p(\bm, \bl \mid u)\,p(\bff \mid \bm, \bl) \nonumber \\
    &= p(u)\,p(\bm, \bl \mid u)\,\prod_{t=1}^{T} p(f_t \mid d_t[\bm], \ell_t)  \nonumber 
\end{align}

\paragraph{Abstract Causal Model.} 
We can now simplify the causal model from \cref{fig:detailed-causal}
into \cref{fig:causal-abstract}. For simplicity, we isolate the lemma and morpho-syntactic feature of interest $L_*$ and $M_*$ and aggregate the other lemmata and morpho-syntactic features into an abstract variable, which we call $\bZ$ and refer to as the \textbf{context}. Furthermore, we only show the aggregation of word forms and representations as $\bF$ and $\bR$ in the abstract model. 
We will assume for now, and in most of our experiments, that the output of the causal model ($\bR$ in \cref{fig:causal-abstract}) represents the contextualized representation of the focus noun.
% obtained from the last layer of a pretrained model. 
However, as we generalize later, the output of the causal model can be any function of word forms $\bF$, such as: the representation of other words in the sentence, the probability distribution assigned by the model to a masked word, or even the output of a downstream task.
We note that \cref{fig:causal-abstract} can be easily re-expanded into \cref{fig:detailed-causal} for any specific utterance by using its dependency tree. \looseness=-1

\begin{figure}
    \centering
    \includegraphics[width=\linewidth]{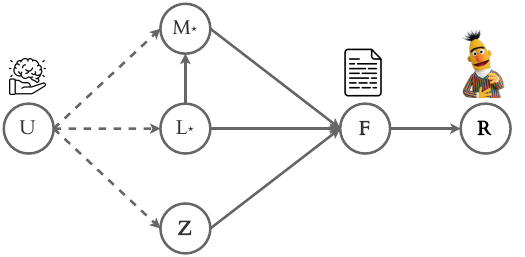}
    \caption{Causal model showing dependencies between the underlying meaning ($U$), lemma ($L_*$) and morpho-syntactic features ($M_*$) of the focus noun, context ($\bZ$), sentences ($\bF$) and contextualized representations ($\bR$).}
    \label{fig:causal-abstract}
    \vspace{-10pt}
\end{figure}
% \vspace{-10pt}
\subsection{Naturalistic Counterfactuals} \label{sec:intervention} \label{sec:algorithm}
In causal inference literature, the $\doo(\cdot)$ operator represents an intervention on a causal diagram. 
For instance, we might want to intervene on the gender of the focus noun (thus using gender $G_*$ as the morpho-syntactic feature of interest $M_*$). 
Concretely, in our example (\cref{fig:detailed-causal}), $\doo(G_{*} = \fem)$ means intervening on the causal graph by removing all the causal edges going into $G_*$ from $U$ and $L_*$ and setting $G_{*}$'s value to a specific realization \fem. 
The result of this intervention on a sampled sentence $\bff$ is a new counterfactual sentence $\bff'$. 
As our causal graph suggests, the relationship between words in a sentence is complex, occurring at multiple levels of abstraction; swapping the gender of a single word---while leaving all other words unchanged---may not result in grammatical text.  
Consequently, one must approach the creation of counterfactuals in natural language with caution.
Specifically, we rely on syntactic interventions to generate our naturalistic counterfactuals.

\paragraph{Syntactic Intervention.}
% We propose a systematic way to intervene on morpho-syntactic features, such as gender or number, to produce counterfactual sentences. 
We develop a heuristic algorithm to perform our interventions, shown in \cref{app:alg}.
Given a sentence and its dependency tree, the algorithm generates a counterfactual version of the sentence, i.e., approximating the $\doo(\cdot)$ operation.
This algorithm processes the dependency tree of each sentence in a depth-first search recursive manner. In each iteration, if the node in process is a noun, it is marked as the focus noun\footnote{Specifically, for gender intervention we only mark the noun as the focus if it is an animate noun.} and a new copy of the sentence is created, which will be the base of the counterfactual sentence.
Then, the intervention is performed, altering the focus noun and all dependent tokens in the copied sentence.\footnote{This is a simplified version of the algorithm where we omit the rule-based re-inflection functions for nouns, adjectives and determiners. We also handle contractions, such as \textit{a} $+$ \textit{el} $\rightarrow$ \textit{al}, which is not mentioned in this pseudo-code.}
Notedly, when we syntactically intervene on the \emph{grammatical gender} or \emph{number} of a noun, we do not alter potentially incompatible semantic contexts. 
Take sentence \ref{ex:err} as an example, where the focus noun is \textit{mujer} and we intervene on \emph{gender}. Its counterfactual sentence \ref{ex:err_masc} is semantically odd and unlikely, but still meaningful.
We can thus estimate the causal effect of grammatical gender in the contextual representations---breaking the correlation between morpho-syntax and semantics.\looseness=-1

\ex. \label{ex:err} \textit{\textbf{La}} \textit{\textbf{mujer}} \textit{dio a luz a 6 bebés}. \\ 
the.\textsc{f}.\textsc{sg}
woman.\textsc{f}.\textsc{sg}
gave birth to 6 babies. \\
% the.\textsc{m}.\textsc{sg}
% code.\textsc{m}.\textsc{sg}\\
The woman gave birth to 6 babies.

\ex. \label{ex:err_masc} \textit{\textbf{El}} \textit{\textbf{hombre}} \textit{dio a luz a 6 bebés}. \\ 
the.\textsc{m}.\textsc{sg}
man.\textsc{m}.\textsc{sg}
gave birth to 6 babies. \\
% the.\textsc{m}.\textsc{sg}
% code.\textsc{m}.\textsc{sg}\\
The man gave birth to 6 babies.

\subsection{Measuring Causal Effects}

In this section, we define the causal effect of a morpho-syntactic feature.
We then present estimators for these values in the following section.
While we focus on grammatical gender here, our derivations are similarly applicable to number and other morpho-syntactic features.

Given a specific focus--context pair $(\ell_*, \bz)$,
the causal effect of gender $G_*$ on the representations is called the \term{individual treatment effect} \cite[ITE;][]{rosenbaum1983central} and is defined as:\looseness=-1%
\begin{align} \label{eq:ite} 
     &\Delta(\ell_*, \bz) = \\
     &\quad \E_{\bF} \Big[ \rep(\bF) \mid G_*\!=\!\masc, L_*\!=\!\ell_*, \bZ\!=\!\bz \Big] \nonumber\\
     &\quad - \E_{\bF} \Big[ \rep(\bF) \mid G_*\!=\!\fem, L_*\!=\!\ell_*, \bZ\!=\!\bz \Big] \nonumber
\end{align}
where $\rep(\cdot)$ is a deterministic function that implements the model being probed, e.g., a pretrained model like BERT, taking a form $\bF$ as input and outputting $\bR$.
% This equation, however, might seem weird at first sight, as $\bF$ is itself a deterministic function of a $\langle G_*, L_*, \bZ \rangle$ tuple.
Since $\bF$ is itself a deterministic function of a $\langle G_*, L_*, \bZ \rangle$ triple, we can rewrite this equation as:\footnote{We overload $\rep(\cdot)$ to receive either $\bF$ or $\langle G_*, L_*, \bZ \rangle$.}
\begin{align} \label{eq:ite-simple} 
     &\Delta(\ell_*, \bz) =  \\
     &\quad \rep(\masc, \ell_*, \bz) - \rep(\fem, \ell_*, \bz) \nonumber
\end{align}
As can be seen from \cref{eq:ite-simple}, the ITE is the difference in the representation given that the focus noun of the sentence is masculine vs. feminine.

To get a more general understanding of how gender affects these representations, however, it is not enough to just look at individual treatment effects.
It is necessary to consider a holistic metric across the entire language.
The \term{average treatment effect} (ATE) is one such metric, and is defined as the difference between the following expectations:
\begin{align} \label{eq:ate-base}
    \ate  = &\E_{\bF} \big[\rep(\bF) \mid \doo(G_*=\masc) \big] \\
    &\quad- \E_{\bF} \big[\rep(\bF) \mid \doo(G_*=\fem) \big] \nonumber
    % &= \E_{S} \left[ \Delta_{\bs} \right] \nonumber
\end{align}
In words, the ATE is the expected causal effect of one random variable on another (in this case gender on the model's representations).
Computing this expectation, however, is not as simple as conditioning it on gender.
As there are backdoor paths\footnote{A backdoor path is a causal path from an analyzed variable to its effect which contains an arrow \emph{to} the treatment (i.e., an arrow going backwards). For instance, consider random variables with a causal structure $Y \rightarrow X \rightarrow Z$ and $Y \rightarrow Z$ (where Y causes X, and both X and Y cause Z). $X \leftarrow Y \rightarrow Z$ forms a backdoor path
\citep[Definition 3,][]{pearl2009causal}.
% See definition 3 in \citep{pearl2009causal} for further details.
}  
from the treatment (gender) to the effect (the representations),
we rely on the \term{backdoor criterion} \citep{pearl2009causal} to compute this expectation. Simply put, we first need to find a set of variables that block every such backdoor path.
We then condition our expectation on them. As shown in \cref{prop1} (in the appendix), the set of variables satisfying the backdoor criterion in our case is $\{L_*, \bZ \}$. Therefore, we can rewrite \cref{eq:ate-base} by conditioning our expectation over$\{L_*, \bZ \}$:
% and the problem reduces to conditioning on them to compute \cref{eq:ate-base}:
\begin{align} \label{eq:ate} 
     &\ate = \\ &\quad\E_{L_*, \bZ} \left[\E_{\bF} \Big[ \rep(\bF) \mid G_* = \masc, L_*, \bZ \Big]\right] \nonumber \\
     & \quad -\E_{L_*, \bZ} \left[\E_{\bF} \Big[ \rep(\bF) \mid G_* = \fem, L_*, \bZ \Big] \right] \nonumber
\end{align}
which we can again rewrite as:%
\begin{align} \label{eq:ate-simple} 
     &\ate = \\
     &\quad \E_{L_*, \bZ} \left[\rep(\masc, L_*, \bZ) - \rep(\fem, L_*, \bZ ) \right] \nonumber
\end{align}
Furthermore, plugging \cref{eq:ite-simple} into \cref{eq:ate-simple}:
\begin{equation} \label{eq:ate_as_delta}
    \ate = \E_{L_*, \bZ} \big[ \Delta(L_*, \bZ) \big]
\end{equation}
reveals that \cref{eq:ate} is just the ITE in expectation.
Thus, the ATE is an appropriate language-wide measure of the effect of gender on contextual representations.\looseness=-1

\section{Approximating the ATE}\label{sec:approx}

In this section, we show how to estimate \cref{eq:ate-simple} from a finite corpus of sentences $\mathcal{S}$. 

\subsection{Na\"ive Estimator}

Each sentence in our corpus can be written as a triple $\langle g_*, \ell_*, \bz \rangle$.
We now discuss how to use such a corpus to estimate \cref{eq:ate-simple}.
Specifically, we first compute the sample mean using two subsets of sentences: one with only masculine focus nouns $\mathcal{S}_\masc$ and the other with feminine ones $\mathcal{S}_\fem$. We then compute their difference:%
\begin{align} \label{eq:ate-baseline} 
    &\baseline = \\
      &\quad\, \frac{1}{|\mathcal{S}_{\masc}|} \sum_{\langle \_, \ell_*, \bz \rangle \in \mathcal{S}_\masc} \rep(\masc, \ell_*, \bz) \nonumber \\
     &\qquad\,\, - \frac{1}{|\mathcal{S}_{\fem}|} \sum_{\langle \_, \ell_*, \bz \rangle \in \mathcal{S}_\fem} \rep(\fem, \ell_*, \bz) \nonumber
\end{align}
We note, however, that this is a very na\"ive estimator.\footnote{This is referred to as the na\"ive or unadjusted estimator in the literature \citep{hernan2020causal}.}
Since $\mathcal{S}_\masc$ (and respectively $\mathcal{S}_\fem$) includes only the fraction of sentences with masculine focus nouns, restricting the sample mean to this set of instances is equivalent to using samples $\bz, \ell_* \sim p(\bz, \ell_* \mid \masc)$, rather than $\bz, \ell_* \sim p(\bz, \ell_*)$ (as should be done for ATE). 
Notably, this is equivalent to ignoring the $\doo$ operator in \cref{eq:ate-base}.
Consequently, \cref{eq:ate-baseline} introduces a purely correlational baseline.
In the following section, we present our (better) causal estimator.\looseness=-1

\subsection{Paired Estimator}
We now use our naturalistic counterfactual sentences to approximate the ATE. Specifically, by relying on our syntactic interventions, we can get both a feminine and masculine form of each sentence $(\ell_*, \bz)$ sampled from the corpus.
Concretely, we use the following \textbf{paired} estimator:%
\begin{align} \label{eq:ate-paired} 
     &\paired = \\ 
     &\frac{1}{|\mathcal{S}|} \sum_{\langle \_, \ell_*, \bz \rangle \in \mathcal{S}} \Big[\underbrace{
    %  \E_{\bF} \big[ \rep(\bF) \mid \doo(G_*\!=\!\masc), \ell_*, \bz \big]
     \rep(\masc, \ell_*, \bz)
     }_{(1)} -
     \underbrace{
    % \E_{\bF} \big[ \rep(\bF) \mid \doo(G_*\!=\!\fem), \ell_*, \bz \big]
    \rep(\fem, \ell_*, \bz)
    }_{(2)} \Big] \nonumber
\end{align}
where, depending on $g_*$, the model's output $\rep(\cdot)$ in (1) and (2) will be extracted from a pre-trained model using either the original or counterfactual sentences.\looseness=-1 

\subsection{A Closer Look at our Estimators}

A closer look at our paired estimator in \cref{eq:ate-paired} shows that it is an \emph{unbiased} Monte Carlo estimator of the ATE presented in \cref{eq:ate-simple}.
In short, if we assume our corpus $\mathcal{S}$ was sampled from the target distribution, we can use this corpus as samples $\ell_*, \bz \sim p(\ell_*, \bz)$.
For each $\ell_*, \bz$ pair, we can then generate sentences with both $\masc$ and $\fem$ grammatical genders to estimate the ATE.

The na\"ive estimator, on the other hand, will not produce an unbiased estimate of the ATE.
As mentioned above, by considering sentences in $\mathcal{S}_\masc$ or $\mathcal{S}_\fem$ separately, we implicitly condition on the gender when approximating each expectation.
This estimator instead approximates a value we term the \defn{average correlational effect} (ACE):
\begin{align} \label{eq:ace} 
     &\ace = \E_{L_*, \bZ \mid G_*=\masc} \left[\rep(\masc, L_*, \bZ)\right] \\
     &\qquad\qquad -\E_{L_*, \bZ \mid G_*=\fem} \left[\rep(\fem, L_*, \bZ) \right] \nonumber
\end{align}

On a separate note, template-based approaches allow the researcher to investigate causal effects by using minimal pairs of sentences, each
of which can be used to estimate an ITE (as in \cref{eq:ite-simple}). 
And, by averaging them, they provide an estimate of ATE (as in \cref{eq:ate_as_delta}). 
However, these minimal pairs are either manually written or automatically collected using template structures. 
Therefore, they cover a narrow (and potentially biased) set of structures, arguably not following a naturalistic distribution.
In other words, their corpus $\mathcal{S}$ cannot be assumed to be sampled according to the distribution $p(\ell_*, \bz)$.\footnote{This becomes clear when we take a look at the sentences in one of such template-based datasets. For instance, all sentences in the Winogender dataset \citep{winogender}---used by \citet{vig2020investigating}---have very similar sentential structures.
Such biases, however, are not necessarily problematic and might be imposed by design to analyze specific phenomena.\looseness=-1}
In practice, templated counterfactuals approximate the treatment effect using an approach identical to the paired estimators---up to a change of distribution. This change of distribution, however, may lead to biased estimates of the ATE.\looseness=-1 

\section{Dataset}
\begin{table}
\centering
\resizebox{\columnwidth}{!}{%
\begin{tabular}{lccccccc}\toprule
&&&& \multicolumn{2}{c}{Gender} & \multicolumn{2}{c}{Number}\\
\cmidrule(lr){5-6} \cmidrule(lr){7-8} 
& train & dev & test & \masc & \fem & \sing & \plr \\
\midrule
% Spanish-AnCora\\ \cmidrule(lr){0-0}
\multirow{2}{*}{AnCora} & \cmark & \cmark & \xmark & $1{,}029$ & 203 & $14{,}602$ & $6{,}692$ \\
& \xmark & \xmark & \cmark & 107 & 21 & $1{,}540$ & 693 \\
\midrule
% Spanish-GSD\\
GSD & \cmark & \cmark & \xmark & 403 & 135 & $9{,}141$ & $3{,}993$ \\
% test-set & 14 & 8 && 222 & 100 \\
\bottomrule
\end{tabular}
}
\caption{Aggregated dataset statistics}
\label{tab:dataset-stats}
\end{table}

We use two Spanish UD treebanks \citep{nivre-etal-2020-universal} in our experiments: Spanish-GSD \citep{mcdonald-etal-2013-universal} and Spanish-AnCora \citep{taule-etal-2008-ancora}.
% To be able to compute causal effects, we create a dataset consisting of sentences with marked focus nouns. 
We only analyze gender on animate nouns and use Open Multilingual WordNet \citep{Gonzalez-Agirre:Laparra:Rigau:2012} to mark the animacy.
% In our gender analysis, we use Open Multilingual WordNet \citep{Gonzalez-Agirre:Laparra:Rigau:2012} to detect the animacy of nouns and only mark animate nouns as focus, however, in number experiments both animate and inanimate nouns can be focus. 
Corpus statistics for the datasets can be found in \Cref{tab:dataset-stats}.\looseness=-1

\subsection{Evaluating Counterfactual Sentences} 

To evaluate our syntactic intervention algorithm (introduced in \cref{sec:algorithm}), we randomly sample a subset of 100 sentences from our datasets.
These samples are evenly distributed across the two datasets (AnCora and GSD), morpho-syntactic features (gender and number), and categories within each feature (masculine, feminine, singular, and plural). 
A native Spanish speaker assessed the grammaticality of sampled sentences. 
Our syntactic intervention algorithm was able to accurately generate counterfactuals for 73\% of the sentences.\footnote{Approximating our estimate of this accuracy with a normal distribution, we obtain a 95\% confidence interval (Wald interval) which ranges from 64\% to 82\% \citep{interval-brown}.}
The accuracy for the gender and number interventions are 76\% and 70\%, respectively.
Due to the subtleties discussed in disentangling syntax from semantics and the complex sentence structures found in naturalistic data, we believe this error is within an acceptable range and leave improvements to future work.\looseness=-1
%--besides the complex sentence structures found in naturalistic data---

\subsection{Template-Based Dataset}\label{sec:templated-data}

To compare our approach to templated counterfactuals, we translate two datasets for measuring gender bias: Winogender \citep{winogender} and WinoBias \citep{winobias}.
As shown by \citet{stanovsky-etal-2019-evaluating}, simply translating these templates to Spanish leads to biased translations, where professions are translated stereotypically and the context is ignored.
Following \citeauthor{stanovsky-etal-2019-evaluating}, we thus put either \textit{handsome} and \textit{pretty} before nouns to enforce the gender constraint after translation.
Consider, for instance, the sentence: ``\textit{\textbf{The developer} was unable to communicate with the writer because \textbf{he} only understands the code.}'' 
We rewrite it as ``\textit{The handsome developer$\ldots$}''.
Similarly, if the pronoun was \textit{she}, we would write ``\textit{The pretty developer$\ldots$}''. 
As an extra constraint, we want to ensure the gender of the \textit{writer} stays the same before and after the intervention.
Therefore, we make two copies of the sentence: One where \textit{writer} is translated as \textit{escritor\textbf{a}} (feminine writer), enforced by replacing \textit{writer} with \textit{pretty writer}, and one where \textit{writer} is translated as \textit{escritor} (masculine writer), enforced by replacing \textit{writer} with \textit{handsome writer}.
We translate the resulting pairs of sentences using the Google Translate API and drop the sentences with wrong gender translations.
In the end, we obtain 2740 minimal pairs.

\section{Insights From ATE Estimators}

In the following experiments, we first use the estimators introduced in \cref{sec:approx} to approximate the ATE of number and grammatical gender on contextualized representations. 
We look at how stable these ATE estimates are across datasets, and whether they change across words with different parts of speech.
% We then analyze the ATE of masked language modeling probabilities.
We then analyze whether the ATE (as an expected value) was an accurate description of how representations actually change in individual sentences.
Finally, we compute the ATE of gender on
% using another function $\rep(\cdot)$, which represents 
the probability of predicting specific adjectives in a sentence, thereby measuring the causal effect of gender in adjective prediction.\looseness=-1

\begin{figure}
\centering
    \includegraphics[width=\linewidth]{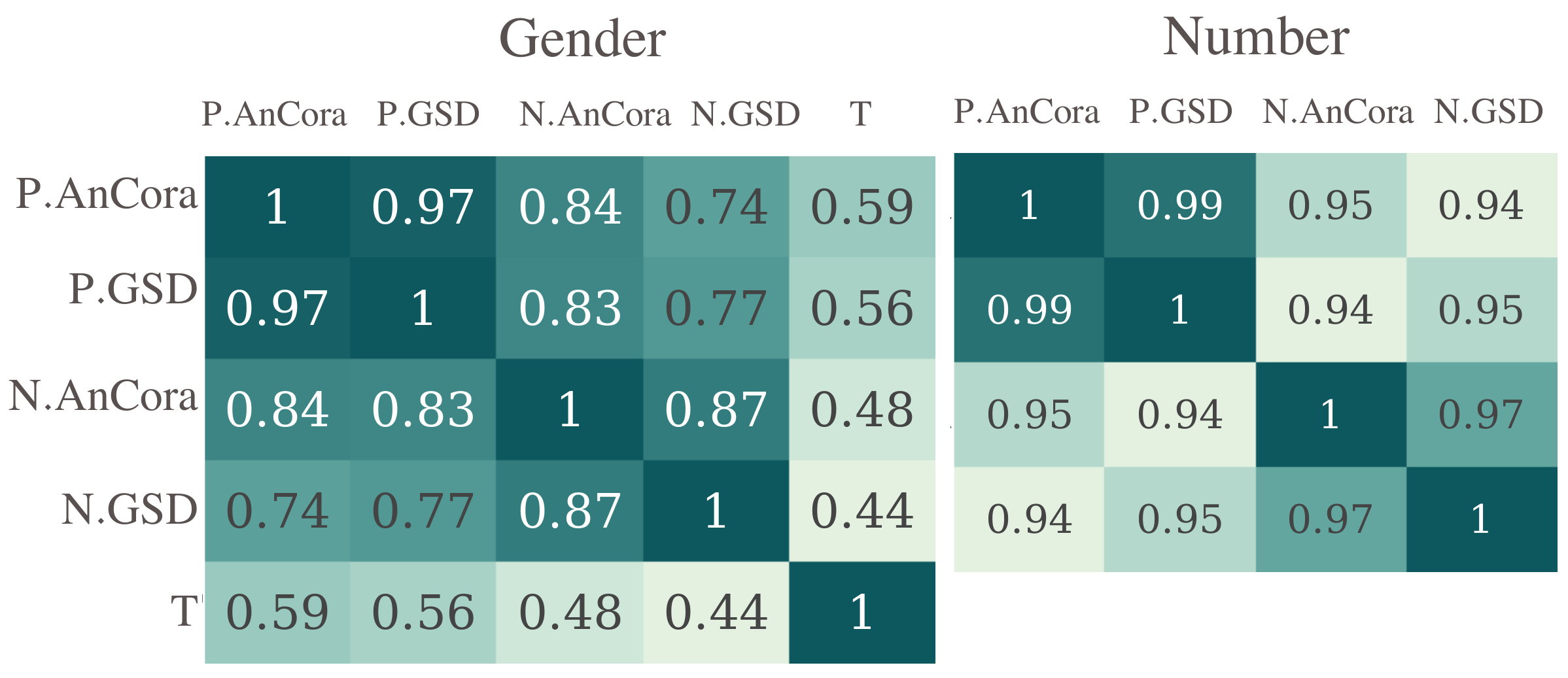}
    \caption{Cosine similarities of the ATE on BERT representations.
    N. represents $\baseline$; P. represents $\paired$; and T. represents $\paired$ estimated on the template-based dataset.
    }
    % Color intensity reflects cosine similarity.}
    \label{fig:cos-ate}
    \vspace{-7pt}
\end{figure}

\subsection{Variations across ATEs}

\paragraph{Variation Across Datasets.} 
Using our ATE estimators, we compute the average treatment effect of both gender and number on BERT's contextualized representations \citep{devlin-etal-2019-bert} of focus nouns.\footnote{More specifically \textsc{bert-base-multilingual-cased} in the Transformers library \citep{wolf-etal-2020-transformers}.}
% Further, when a word is split into multiple word-pieces, we take the mean over their representations.
We compute the $\paired$ and $\baseline$ estimators. 
\cref{fig:cos-ate} presents their cosine similarities.
We observe high cosine similarities between paired estimators across datasets,\footnote{To make sure that the imbalance in the dataset \emph{before} intervention doesn't have a significant effect on results, we create a balanced version of the dataset, where we observe similar results.} but lower cosine similarities with the na\"ive estimator. 
This suggests that, while the causal effect is stable across treebanks, the correlational effect is more susceptible to variations in the datasets, e.g., semantic variations due to the domain from which treebanks were sampled. 

\paragraph{Templated vs.\@ Naturalistic Counterfactuals.} 
As an extra baseline, we estimate the ATE using a paired estimator with the template-based dataset introduced in \cref{sec:templated-data}
We observe a low cosine similarity between our naturalistic ATE estimates and the template-based ones. 
This shows that sentences from template-based datasets are substantially different from naturalistic datasets, thus failing to provide unbiased estimates in naturalistic settings.\looseness=-1

\begin{figure}
    \centering
    \includegraphics[width=\linewidth]{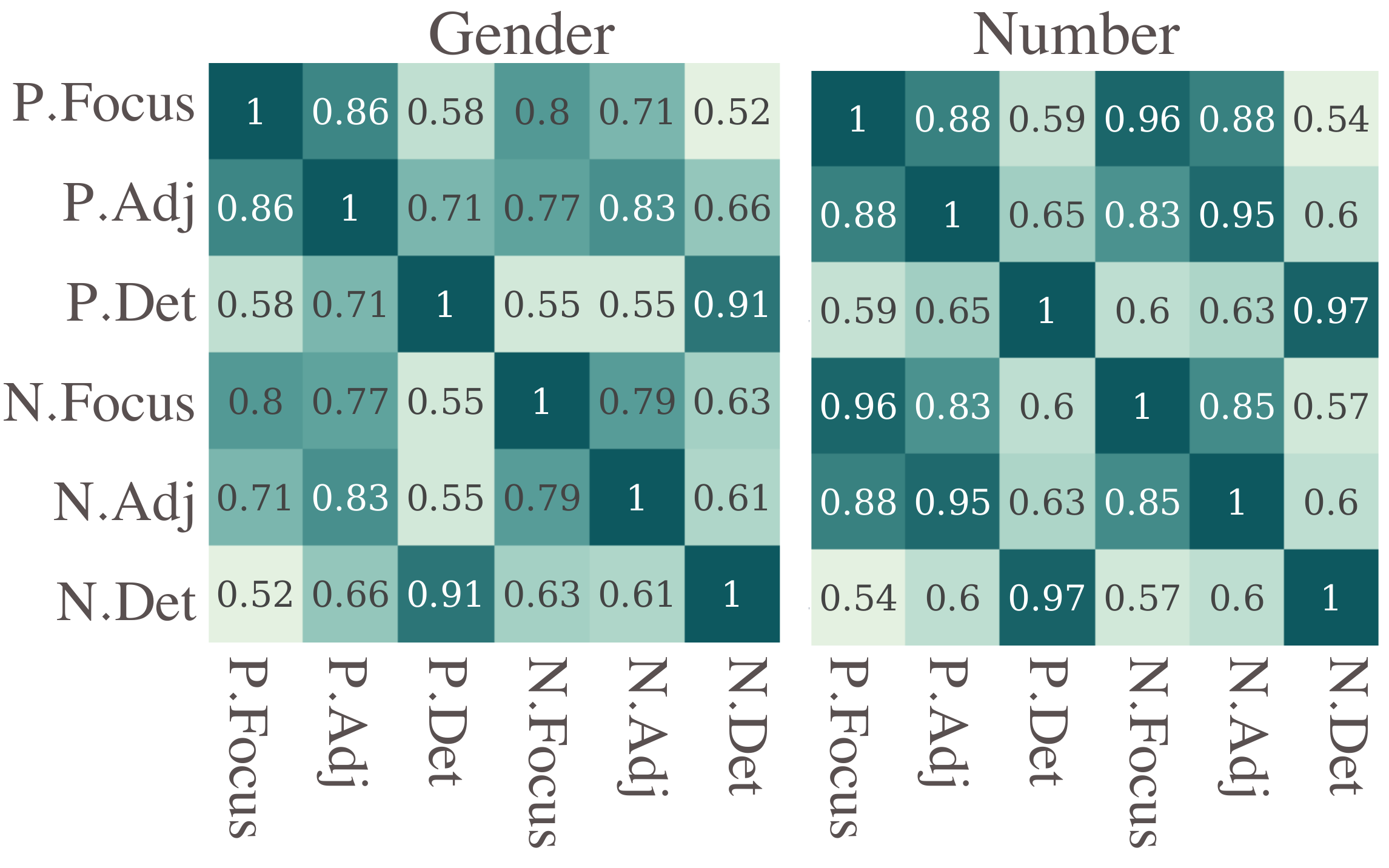}
    \caption{Cosine similarity of ATE estimators computed on focus nouns, adjectives and determiners using BERT representations.}
    \label{fig:cos-det-adj}
    \vspace{-7pt}
\end{figure}

\paragraph{Variation Across Part-of-Speech Tags.} Using the same approach, we additionally compute the ATEs 
on adjectives and determiners. 
\cref{fig:cos-det-adj} presents our na\"ive and paired ATE estimates, computed on words with different parts of speech.
These results suggest that gender and number do not affect the focus noun or its dependent words in the same way.
While the ATE on focus nouns and adjectives are strongly aligned, the cosine similarity between ATEs on focus nouns and determiners is smaller.\footnote{Relatedly, \citet{lasri-etal-2022-probing} recently showed BERT encodes information about number differently for nouns and verbs.}

\subsection{Masked Language Modeling Predictions}

% On this second set
We now analyze the effect of our morpho-syntactic features on masked language modeling predictions. 
Specifically, we analyze RoBERTa \citep{conneau2019unsupervised}\footnote{More specifically, we use \textsc{xlm-roberta-base}.} in these experiments, as it has better performance than BERT in masked prediction.
We thus look at how grammatical gender and number affect the probability RoBERTa assigns to each word in its output vocabulary.
% \begin{align}
%     &\E_{w \in \mathcal{V}} [\log p(w \mid F) - \log p(w \mid F)] \\
%     &\E_{F} \big[ \E_{w \in \mathcal{V}} [\log p(w \mid F) - \log p(w \mid F)] \mid \masc, L_*, Z \big]
% \end{align}

\newcommand{\maskedpred}{\mathrm{MProbs}}

\begin{table}[t]
\centering
% \ra{1.3}
% \tabcolsep=0.11cm
\resizebox{\columnwidth}{!}{%
\begin{tabular}{llccc@{}}
\cmidrule[\heavyrulewidth](l){2-5}
& & $\maskedpred(\bh)$ & $\maskedpred(\bhatbaseline)$ & $\maskedpred(\bhatpaired)$  \\
\cmidrule(l){2-5}
% Spanish-AnCora\\ \cmidrule(lr){0-0}
\ldelim\{{3}{2.5mm}[\parbox{3mm}{\rotatebox[origin=c]{90}{\textsc{gender}}}] 
&\textsc{Det}: $\maskedpred(\bhp)$ & $4.85 \pm 2.39$ & $1.09 \pm 1.4$ & $0.67 \pm 1.14$  \\
&\textsc{Adj}: $\maskedpred(\bhp)$ & $2.29 \pm 2$ & $1.04 \pm 1.05$ & $0.9 \pm 1.12$  \\
&\textsc{Focus}: $\maskedpred(\bhp)$ & $3.75 \pm 2.67$ & $1.74 \pm 1.11$ & $1.53 \pm 0.93$ \\ \cmidrule(l){2-5}
\ldelim\{{3}{2.5mm}[\parbox{3mm}{\rotatebox[origin=c]{90}{\textsc{number}}}] 
&\textsc{Det}: $\maskedpred(\bhp)$ & $6.93 \pm 2.52$ & $1.92 \pm 2.87$ & $2.05 \pm 2.64 $ \\
&\textsc{Adj}: $\maskedpred(\bhp)$ & $5.63 \pm 2.75$ & $2.25 \pm 2.2$ & $2.5 \pm 2.17$ \\
&\textsc{Focus}: $\maskedpred(\bhp)$ & $5.50 \pm 3.02$ & $2.25 \pm 2.14$  & $2.41 \pm 1.9$ \\
\cmidrule[\heavyrulewidth](l){2-5}
\end{tabular}
}
\caption{Mean and standard deviation of Jensen--Shannon divergence between the masked probability distributions of focus nouns, determiners and adjectives over the corpus.}
\label{tab:js}
    \vspace{-7pt}
\end{table}

We start by masking a word in our sentence: either the focus noun, a dependent determiner, or an adjective.
We then obtain this word's contextual representation $\bh$.
Second, we apply a syntactic intervention to this sentence, and, following similar steps, obtain another representation $\bhp$.
Third, we use these representations to obtain the probabilities RoBERTa assigns to the words in its vocabulary $\maskedpred(\bh)$ and $\maskedpred(\bhp)$. 
Finally, we get these same probability assignments, but using ATE to estimate the counterfactual representations:
\begin{align}
    \maskedpred(\bhatpaired),\quad & \bhatpaired = \bh \pm \paired \\
    \maskedpred(\bhatbaseline),\quad &  \bhatbaseline = \bh \pm \baseline
\end{align}
% \mathcheck{\begin{align}
%     &\bhp \approx \bhatpaired = \bh \pm \paired \\
%     &\bhp \approx  \bhatbaseline = \bh \pm \baseline
% \end{align}}

\begin{figure*}[t]
    \centering
    \includegraphics[width=\linewidth]{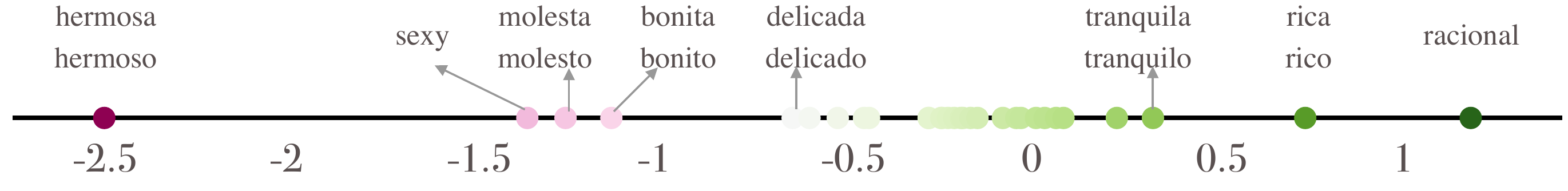} 
    \caption{$\paired^{(a)}$ values computed using \cref{eq:adj-bias} to measure causal gender bias in masked adjective prediction.}
    \label{fig:adj-bias}
    \vspace{-10pt}
\end{figure*}

We now look at how probability assignments change as a function of our interventions.
Specifically, \cref{tab:js} shows Jensen-–Shannon divergences between $\maskedpred(\cdot)$ computed on top of different representations. We can make a number of observations based on this table. First, for gender, these distributions change more when predicting determiners and focus nouns than adjectives.
We speculate that this may be because many Spanish adjectives are syncretic, i.e., they have the same inflected form for masculine and feminine, e.g., \textit{inteligente} (intelligent), or \textit{profesional} (professional).
% , \textit{independiente} (independent).
Second, the distributions change more after an intervention on number than on gender. 
Third, when we use either of our estimators to approximate the counterfactual representation,
% Third, when the original representations are summed with either of our estimators,
the divergences are greatly reduced; These results show that the ATE values do describe (at least to some extent) the change of representations in individual sentences.\looseness=-1

\subsection{Gender Bias in Adjectives}

As shown by \citet{bartl-etal-2020-unmasking} and \citet{gonen-etal-2022-analyzing}, the results of studies on gender bias in English are not completely transferable to gender-marking languages.
We analyze the causal effect of gender on specific masked adjective probabilities, predicted by the RoBERTa model.
To this end, we manually create a list of 30 adjectives (the complete list is in \cref{app:adj-list}) in both masculine and feminine forms.
We sample a sentence $\bff$ from a subset of the dataset in which the focus noun has one dependent adjective $a$, and mask this adjective.
We then define a new function $\rep(\cdot)$ to measure the ATE on adjective probabilities.
Specifically, we write:\looseness=-1
\begin{align} \label{eq:adj-bias-1}
    \rep_a(\bff) &= \ln\, p_{\theta}(a \mid \bff) \\ \nonumber
    &= \ln\,p_{\theta}(a \mid g_*, \ell_*, \bz)
    % \underset{L_*, \bZ}{\text{median}} \Big[ \ln\, \frac{p_{\theta}(a \mid G_* = \masc, L_*, \bZ)}{p_{\theta}(a \mid G_* = \fem, L_*, \bZ)} \Big]
\end{align}
where $a$ represents an adjective in our list (that also exists in RoBERTa's vocabulary $\calV$) and $p_\theta(a \mid \bff)$ is the probability RoBERTa assigns to that adjective.\footnote{When an adjective in the list has two forms depending on the gender (e.g., \textit{hermos\textbf{a}/hermos\textbf{o}}), we sum the probabilities for masculine and feminine forms.}
We plug this new function into our paired ATE estimator in \cref{eq:ate-paired}.
As this prediction is somewhat susceptible to noise, we replace the mean in \cref{eq:ate-paired} with the median, i.e., we compute\looseness=-1:
% \begin{equation} \label{eq:adj-bias}
%     \paired^{(a)} = \underset{L_*, \bZ}{\mathrm{median}} \Big[ \ln\, \frac{p_{\theta}(a \mid \masc, L_*, \bZ)}{p_{\theta}(a \mid \fem, L_*, \bZ)} \Big]
%     % \paired^{(a)} = \E_{{L_*, \bZ}} \Big[ \ln\, \frac{p_{\theta}(a \mid \masc, L_*, \bZ)}{p_{\theta}(a \mid \fem, L_*, \bZ)} \Big]
% \end{equation}
\begin{equation} \label{eq:adj-bias}
    \paired^{(a)} = \underset{\langle \_, \ell_*, \bz \rangle \in \mathcal{S}}{\mathrm{median}} \left[ \ln\, \frac{p_{\theta}(a \mid \masc, \ell_*, \bz)}{p_{\theta}(a \mid \fem, \ell_*, \bz)} \right]
    % \paired^{(a)} = \E_{{L_*, \bZ}} \Big[ \ln\, \frac{p_{\theta}(a \mid \masc, L_*, \bZ)}{p_{\theta}(a \mid \fem, L_*, \bZ)} \Big]
\end{equation}
In this equation, if $\paired^{(a)} > 0$, the predicted probability that the adjective appears in a sentence where it is dependent on a masculine focus noun will be typically higher than in a sentence with a feminine focus noun. Whereas if $\paired^{(a)} < 0$ the reverse will hold. Therefore, we say $a$ is biased towards masculine gender if $\paired^{(a)} > 0$ and it is biased towards feminine gender if $\paired^{(a)} < 0$ 
As shown in \cref{fig:adj-bias}, rich (\textit{ric\textbf{a}/ric\textbf{o}}) and rational (\textit{racional}) are more biased towards masculine gender, while beautiful (\textit{hermos\textbf{a}/hermos\textbf{o}}) is biased towards feminine gender.

\begin{figure*}[t]
    \centering
    \begin{subfigure}[b]{0.32\textwidth}
        \includegraphics[width=\linewidth]{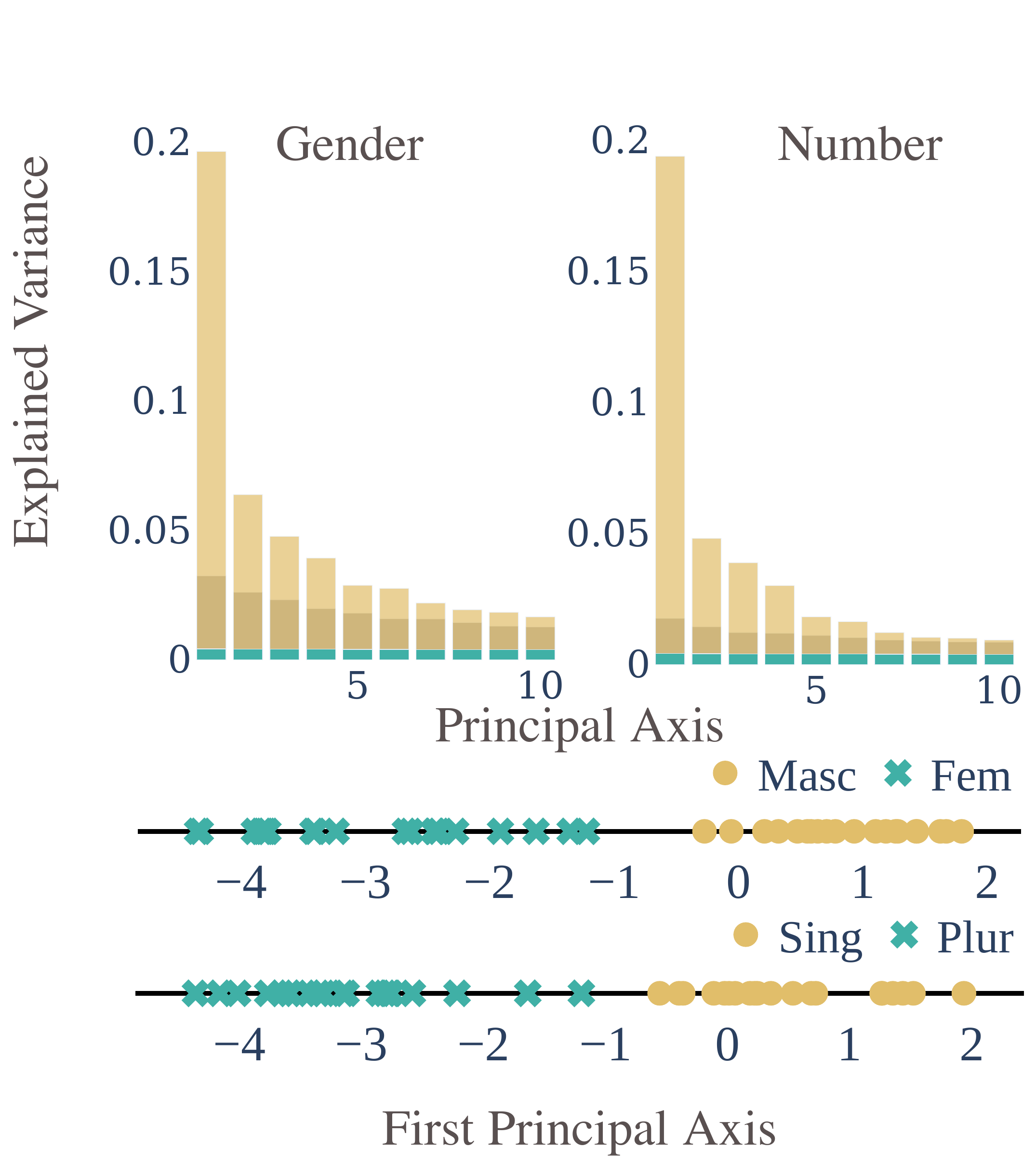} %figs/pca-variance-gender-number.pdf
        \caption{BERT}
    \end{subfigure}%
    ~
    \begin{subfigure}[b]{0.32\textwidth}
        \includegraphics[width=\linewidth]{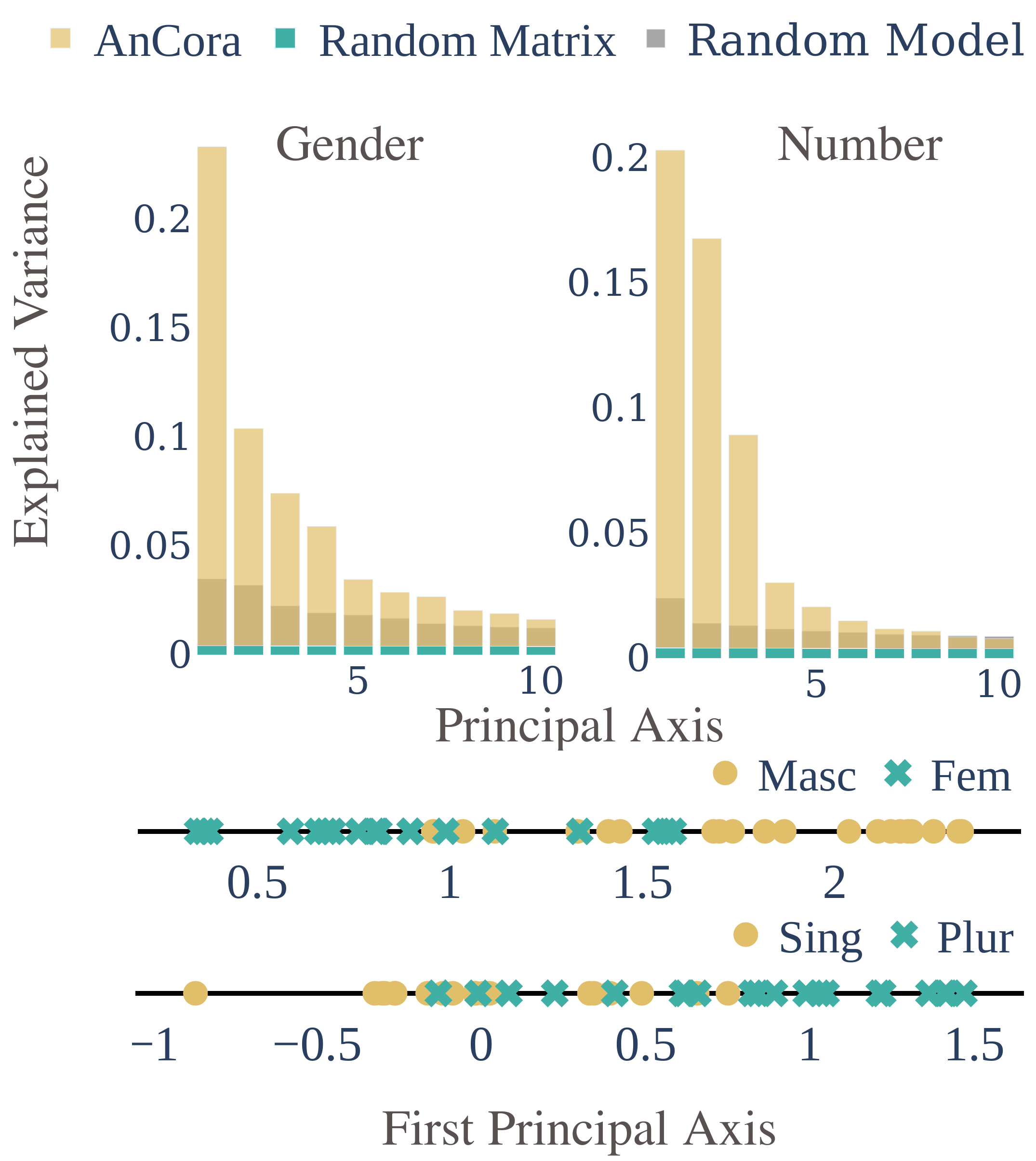} %figs/pca-variance-gender-number.pdf
        \caption{RoBERTa}
    \end{subfigure}%
    ~
    \begin{subfigure}[b]{0.32\textwidth}
        \includegraphics[width=\linewidth]{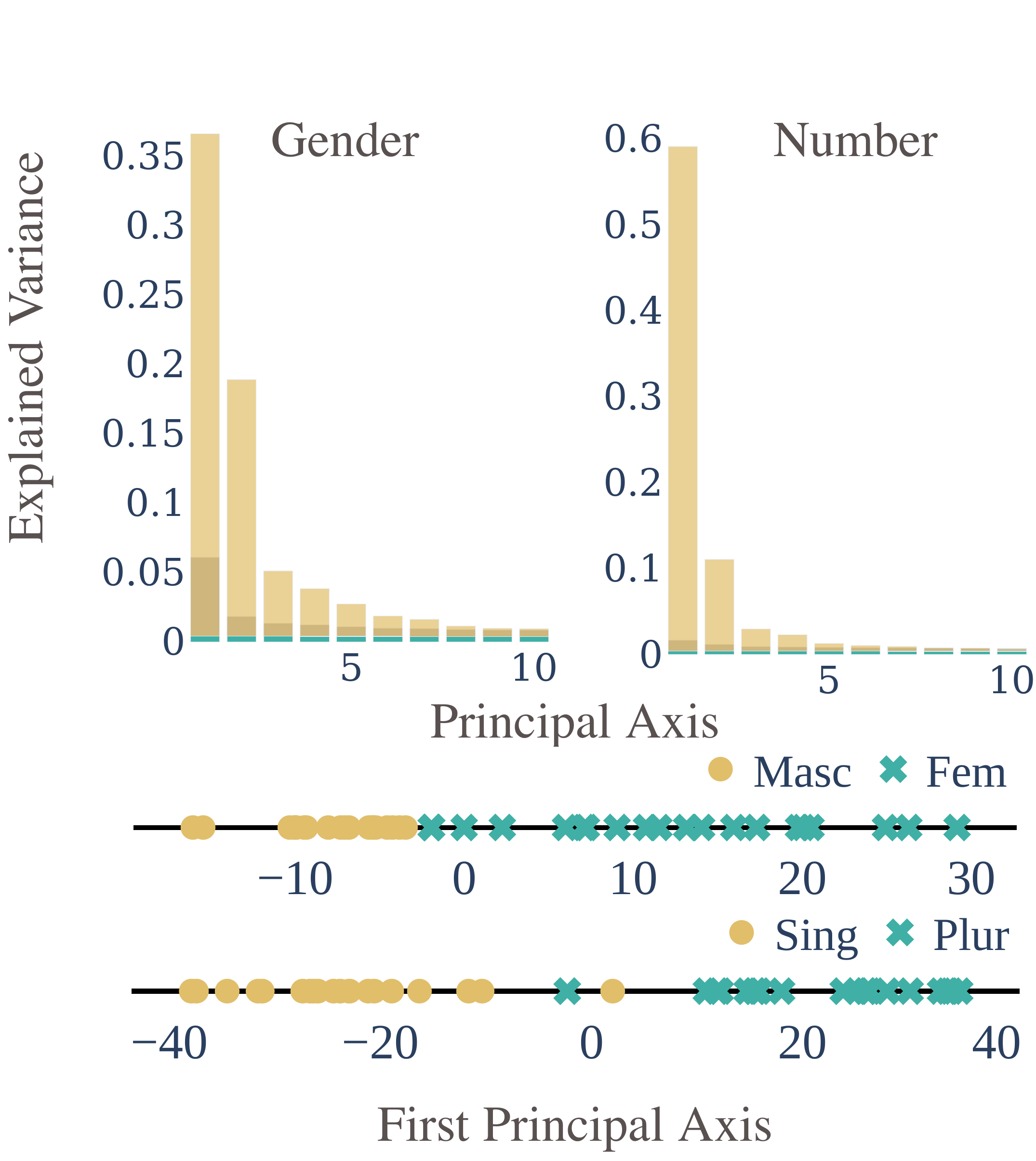} %figs/pca-variance-gender-number.pdf
        \caption{GPT-2}
    \end{subfigure}
    \vspace{-6pt}
    \caption{(top) Percentage of the gender and number variance explained by the first 10 PCA components. (bottom) The projection of 20 pairs of focus noun's representations on the first principal component.}
    \vspace{-.7em}
    \label{fig:pca}
\end{figure*}

\section{Insights From Naturalistic Counterfactuals}
In the following experiments, we rely on a dataset augmented with naturalistic counterfactuals.
We first explore the geometry of the encoded morpho-syntactic features.
We then run a more classic correlational probing experiment, 
highlighting the importance of a causal framework when analyzing representations.

\subsection{Geometry of Morpho-Syntactic Features}

In this experiment, we follow \citeposs{bolukbasi2016man} methodology to isolate the subspace capturing our morpho-syntactic features' information.
First, we create a matrix with the representations of all focus nouns in our counterfactually augmented dataset. 
Second, we pair each noun's representation with its counterfactual representation (after the intervention).
Third, we center the matrix of representations by subtracting each pair's mean.
Finally, we perform principal component analysis on this new matrix.\looseness=-1

As \cref{fig:pca} shows, in BERT and RoBERTa, the first principal component explains close to 20\% of the variance caused by gender and number.
% and it explains almost 3 times more variance than the next principal component.
In GPT-2 \citep{radford2019language},\footnote{More specifically, we use \textsc{gpt2-small-spanish}.} more than half of the variance is captured by the first or the first two principal components.\footnote{These results are not obtained due to the randomness of a finite sample of high dimensional vectors. Neither are they due to the structure of the model. To show this, we present two random baselines: random vectors of the same size $|\mathcal{S}|$ (as green traces) and representations extracted from models with randomized weights (as gray traces) in \cref{fig:pca}.}
This result is in line with prior work  \citep[e.g.,][on Italian word embeddings]{Biasion2020GenderBI}, and suggests that these morpho-syntactic features are linearly encoded in the representations.\looseness=-1

To further explore the gender and number subspaces, we project a random sample of 20 sentences (along with their counterfactuals) onto the first principal component.
\cref{fig:pca} (bottom) shows that the three models we probe can (at least to a large extent) differentiate both morpho-syntactic features using a single dimension. 
Notably, this first principal component is strongly aligned with the estimate $\paired$; they have a cosine similarity of roughly 0.99 in all these settings.\looseness=-1

\begin{figure*}
    \centering
    \includegraphics[width=\linewidth]{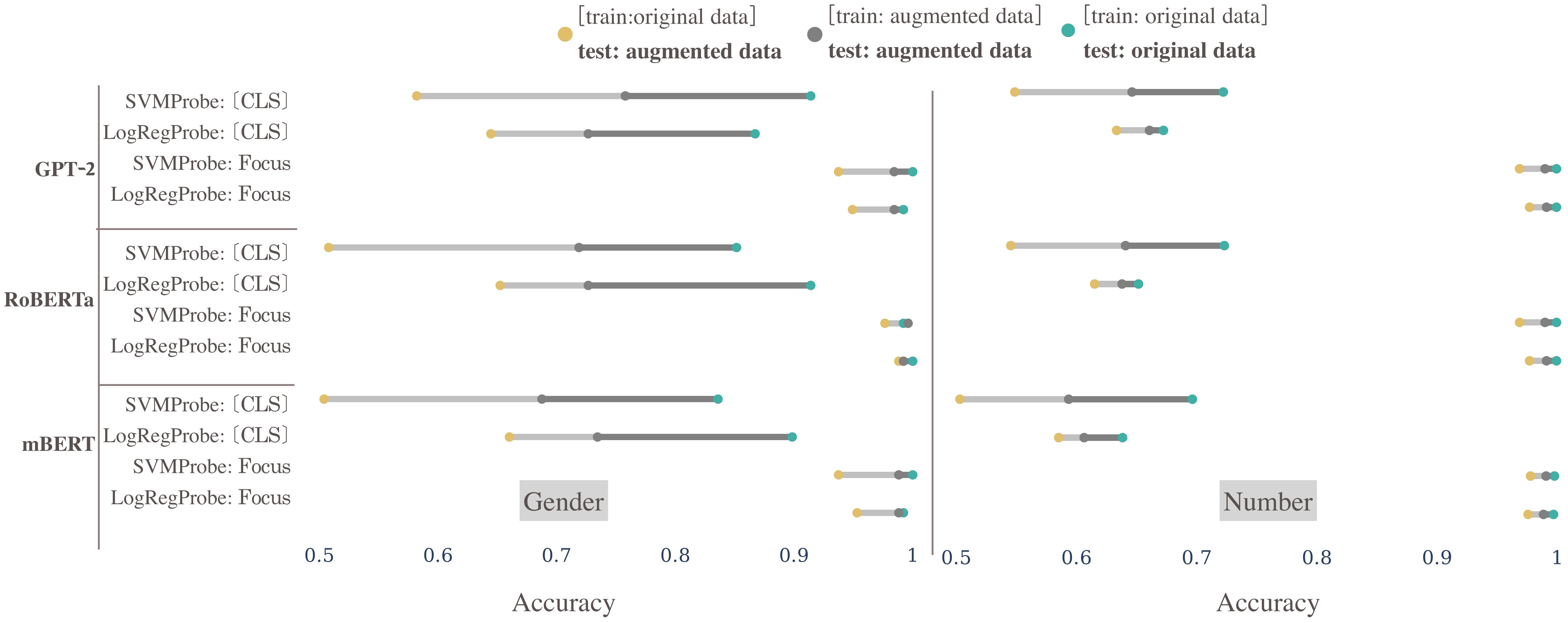}
    \caption{Accuracy scores of gender and number probes on the original and augmented datasets.}\label{fig:probe}
    \vspace{-1em}
\end{figure*}

\subsection{Analysis of Correlational Probing}
We now use a dataset augmented with naturalistic counterfactuals to empirically evaluate the entanglement of correlation and causation discussed in \cref{sec:probes}, which arises when using diagnostic probes to probe the representations.
Again, we probe three contextual representations: BERT, RoBERTa, and GPT-2.
We train logistic regressors (LogRegProbe) and support vector machines (SVMProbe) to predict either gender or number of the focus noun from its contextual representation. 
Further, we probe the representations in two positions: the focus noun and the \cls token (or a sentence's last token, for GPT-2).\footnote{BERT and RoBERTa treat \cls as a special token whose representation is supposed to aggregate information from the whole input sentence. In GPT-2, the last token in a sentence should also contain information about all its previous tokens.}\looseness=-1

% \paragraph{Performance on Original Data.} 
Accuracy of correlational probes on the original dataset is shown in \cref{fig:probe} as green points. 
Both gender and number probes reach a near-perfect accuracy on focus nouns' representations.
Furthermore, all correlational gender probes reach a high accuracy in \cls representations, suggesting that gender can be reliably recovered from them.\looseness=-1

Next, we evaluate trained probes on counterfactually augmented test sets (shown as yellow points in \cref{fig:probe}).
We see that there is a drop in performance in all settings, and more specifically, the accuracy of probes on \cls representations drops significantly when evaluated on the counterfactual test set. This suggest that the previous results using correlational probes overestimate the extent to which gender and number can be predicted from the representations.\looseness=-1

Finally, we also \emph{train} supervised probes on a counterfactually augmented dataset in order to study whether we can achieve the levels of performance attested in the literature (shown as gray points in \cref{fig:probe}).
Since these probes are trained on a dataset augmented with counterfactuals, they are not as susceptible to spurious correlations; we thus call them the causal probes.
Although there is a considerable improvement in accuracy, there is still a large gap between correlational and causal probes' accuracies.
Together, these results imply that correlational probes are sensitive to spurious correlations in the data (such as the semantic context in which nouns appear), and do not learn to predict grammatical gender robustly.\looseness=-1

\section{Conclusion}
We propose a heuristic algorithm for syntactic intervention which, when applied to naturalistic data, allows us to create naturalistic counterfactuals. 
Although similar analyses have been run by prior work, using either templated or representational counterfactuals \citep[\textit{inter alia}]{elazar2020bert,vig2020investigating,bolukbasi2016man}, our syntactic intervention approach allows us to run these analyses on naturalistic data.
We further discuss how to use these counterfactuals in a causal setting to probe for morpho-syntax.
% We then use these counterfactual data in our analysis.
Experimentally, we first showed that ATE estimates are more robust to dataset differences than either our na\"ive (correlational) estimator, or template-based approaches.
Second, we showed that ATE can (at least partially) predict how representations will be affected after intervention on gender or number.
Third, we employ our ATE framework to study gender bias, finding a list of adjectives that are biased towards one or other gender.
Fourth, we find that the variation of gender and number can be captured by a few principal axes in the nouns' representations.
% Fourth, we find that both gender and number are structurally encoded in contextualized representations. 
And, finally, we highlight the importance of causal analyses when probing: When evaluated on counterfactually augmented data, correlational probe results drop significantly.\looseness=-1

\section*{Ethical Concerns}
Pretrained models often encode gender bias.
The adjective bias experiments in this work can provide further insights into the extent to which these biases are encoded in multilingual pretrained models. 
As our paper focuses on (grammatical) gender as a morpho-syntactic feature, it focuses on a binary notion of gender, which is not representative of the full spectrum of human gender expression.
% As a result of using existing annotated datasets, we have been restricted to using a binary notion of gender. 
Most of the analysis in this paper focuses on measuring grammatical gender, not gender bias. 
We thus advise caution when interpreting the findings from this work.
Nonetheless, we hope the causal structure formalized here, together with our analyses, can be of use to bias mitigation techniques in future \citep[e.g.,][]{liang-etal-2020-towards}.\looseness=-1

\section*{Acknowledgments}
We would like to thank Shauli Ravfogel for feedback on a preliminary draft and Dami{\'a}n Blasi for analyzing the errors made by our naturalistic counterfactual algorithm.
We would also like to thank Miguel Ballesteros, who served as a marvelous action editor, and the the anonymous reviewers for their insightful feedback during the review process. 
Afra Amini is supported by ETH AI Center doctoral fellowship.
Ryan Cotterell acknowledges support from the SNSF through the ``The Forgotten Role of Inductive Bias in Interpretability'' project.
%Swiss Natioanl Science Foundation () 

% \newpage
\appendix
\section{List of Adjectives}\label{app:adj-list}
We use 30 different Spanish adjectives in our experiments: \textit{hermoso/hermosa (beautiful),
sexy (sexy),
molest/molesta (upset),
bonito/bonita (pretty),
delicado/delicada (delicate),
rápido/rápida (fast),
joven (young),
inteligente (intelligent),
divertido/divertida (funny),
fuerte (strong),
duro/dura (hard),
alegre (cheerful),
protegido/protegida (protected),
excelente (excellent),
nuevo/nueva (new),
serio/seria (serious),
sensible (sensitive),
profesional (professional),
emocional (emotional),
independiente (independent),
fantástico/fantástica (fantastic),
brutal (brutal),
malo/mala (bad),
bueno/buena (good),
horrible (horrible),
triste (sad),
amable (nice),
tranquilo/tranquila (quiet),
rico/rica (rich),
racional (rational)}.\looseness=-1
\onecolumn
% \begin{multicols}{2}
\section{Algorithm for Heuristic Intervention} \label{app:alg}
\begin{algorithmic}[1]
\footnotesize
\Procedure{ReinflectTree}{\node, \parent, \istate}
 \State \isFocusNoun $\; \gets \; $ \false
 \If {\istate $==$ \normal and \node$\;$ is a valid noun}
 \State \textsc{ReinflectNoun}(\node) \rightcomment{Change the noun and set the morpho-syntactic feature to the desired value}
 \State \isFocusNoun $\; \gets \; $ \true
 \If {\node $\;$ is subject}
 \State \textsc{ReinflectVerb}(\parent) \rightcomment{Change verb}
 \EndIf
 \EndIf
 \If {\istate $==$ \dir} \rightcomment{Current node is a direct dependent of a focus noun}
    \If {\node $\;$ is a determiner}
        \State \textsc{ReinflectDet}(\node) \rightcomment{Change determiner}
    \EndIf
    \If {\node $\;$ is an adjective modifier}
        \State \textsc{ReinflectAdj}(\node) \rightcomment{Change adjective}
    \EndIf
    \If {\node $\;$ is a nominal subject}
        \State \textsc{ReinflectNoun}(\node) \rightcomment{Change noun}
        \State \isNSubj $\; \gets \; $ \true
    \EndIf
    \If {\node $\;$ is a copula}
        \State \textsc{ReinflectCop}(\node) \rightcomment{Change copula}
    \EndIf
 \EndIf
 
 \If {\istate $==$ \indir and \node $\;$ is an adjective modifier and \parent $\;$ is an adjective modifier} \rightcomment{Current node is a descendant of a focus noun}
 \State \textsc{ReinflectAdj}(\node)
 \EndIf
 \For {$\mathtt{child}$ $\in$ children(\node)}
 \If {\isFocusNoun $\;$ or \isNSubj}
 \State \textsc{ReinflectTree}($\mathtt{child}$, \node, \dir)
 \ElsIf {\istate $==$ \dir or \istate $==$ \indir}
 \State \textsc{ReinflectTree}($\mathtt{child}$, \node, \indir)
 \Else 
 \State \textsc{ReinflectTree}($\mathtt{child}$, \node, \normal)
 \EndIf
 \EndFor
\EndProcedure
\end{algorithmic}
% \columnbreak
% \section{Dataset Statistics}
% \end{multicols}
\section{Theory}
\begin{prop}\label{prop1} {\normalfont In this proposition we show that the average treatment effect is equivalent to the difference of two expectations with no do-operator:}
{\normalfont
    \begin{align} \label{eq:prop1}
     \E_{\bF} \Big[ \rep(\bF) \mid &\,\mathrm{do}\left(G_* = \masc\right) \Big] -\E_{\bF} \Big[ \rep(\bF) \mid \mathrm{do}\left(G_* = \fem\right)\Big] \\ 
     &= \E_{L_*, \bZ} \left[\E_{\bF} \Big[ \rep(\bF) \mid G_* = \masc, L_*, \bZ \Big]\right] -\E_{L_*, \bZ} \left[\E_{\bF} \Big[ \rep(\bF) \mid G_* = \fem, L_*, \bZ \Big] \right] \nonumber
\end{align}
}
\end{prop}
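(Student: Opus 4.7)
The plan is to apply Pearl's backdoor adjustment formula with adjustment set $\{L_*, \bZ\}$. The statement reduces to a standard identifiability result once we verify that $\{L_*, \bZ\}$ is a valid backdoor adjustment set for the causal effect of $G_*$ on $\bR$ in the abstract causal graph of \cref{fig:causal-abstract} (equivalently, the expanded graph in \cref{fig:detailed-causal}).

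First, I would enumerate the backdoor paths from $G_*$ to $\bR$, i.e., paths that begin with an arrow pointing \emph{into} $G_*$. From the causal assumptions stated in \cref{sec:cm}, the arrows into $G_*$ come from $U$ and potentially $L_*$. The candidate backdoor paths are therefore of the form $G_* \leftarrow U \to \bZ \to \bF \to \bR$, $G_* \leftarrow U \to L_* \to \bF \to \bR$, and $G_* \leftarrow L_* \to \bF \to \bR$. I would then check, using the standard d-separation rules, that conditioning on $\{L_*, \bZ\}$ blocks each of these chains: the first is blocked at $\bZ$, while the second and third are blocked at $L_*$. Crucially, neither $L_*$ nor $\bZ$ is a descendant of $G_*$, so conditioning on them does not open any new colliders along these paths. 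This establishes that $\{L_*, \bZ\}$ satisfies the backdoor criterion of \citet{pearl2009causal}.

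Given this, the backdoor adjustment formula immediately yields, for each realization $g \in \{\masc, \fem\}$,
\begin{equation*}
\E_{\bF}\!\bigl[\rep(\bF) \mid \doo(G_* = g)\bigr]
= \E_{L_*, \bZ}\!\Bigl[\E_{\bF}\!\bigl[\rep(\bF) \mid G_* = g,\, L_*,\, \bZ\bigr]\Bigr].
\end{equation*}
Substituting $g = \masc$ and $g = \fem$ and subtracting gives exactly \cref{eq:prop1}. To keep the exposition self-contained, I would include a short derivation of the backdoor formula in our setting: start from $p(\br \mid \doo(G_*=g)) = \sum_{\ell_*, \bz} p(\br \mid g, \ell_*, \bz)\, p(\ell_*, \bz)$, obtained by marginalizing the post-intervention distribution over the adjustment set, then take expectations of $\rep(\bF)$ under both sides.

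The only real obstacle is bookkeeping: making the set of arrows in the causal graph fully explicit (in particular the relationships $U \to \bZ$ and $U \to L_*$ implicit in the bullet points of \cref{sec:cm}) and checking d-separation exhaustively so that no backdoor path is overlooked. Once the graph is laid out, the identification argument itself is a direct invocation of a textbook result and involves no nontrivial calculation.
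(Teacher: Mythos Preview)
Your proposal is correct and takes essentially the same approach as the paper: verify that $\{L_*, \bZ\}$ satisfies the backdoor criterion by enumerating and blocking all backdoor paths, then invoke the adjustment formula term-by-term and subtract. You are in fact slightly more thorough than the paper's own proof, which lists only the two paths through $U$; you also include $G_* \leftarrow L_* \to \bF \to \bR$ (arising from the assumption that $L_t$ can causally affect $M_t$), which is likewise blocked by conditioning on $L_*$.
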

\begin{proof}
First, we note the existence of two backdoor paths in our model \cref{fig:causal-abstract}: $M_* \leftarrow U \rightarrow \bZ \rightarrow \bF \rightarrow \bR$ and $M_* \leftarrow U \rightarrow L_* \rightarrow \bF \rightarrow \bR$. We can easily check that $\bZ$ blocks the first and $L_*$ blocks the second path, and neither $\bZ$ nor $L_*$ are descendants of $M_*$. Therefore $\{L_*, \boldsymbol{Z}\}$ satisfies the back-door criterion.
To make the proof simpler, we show that the first term of the left-hand side of \cref{eq:prop1} equals the first term in the right-hand side of \cref{eq:prop1} and then we get the full  result by symmetry.
We proceed as follows:
\begin{align}
    \E_{\bF} \Big[ \rep(\bF) \mid & \mathrm{do}\left(G_* = \masc\right) \Big] \\
    &= \sum_{\ell_* \in \mathcal{L}} \sum_{\bz \in \mathcal{Z}}\E_{\bF} \Big[ \rep(\bF) \mid \doo(G_* = \masc), \ell_*, \bz)\Big] p(\ell_*, \bz) \quad &\text{{\color{gray} (marginalize $\ell_*$ and $\bz$)}} \nonumber \\
    &= \sum_{\ell_* \in \mathcal{L}} \sum_{\bz \in \mathcal{Z}}\E_{\bF} \Big[ \rep(\bF) \mid G_* = \masc, \ell_*, \bz)\Big] p(\ell_*, \bz) \quad &\text{{\color{gray} (backdoor criterion)}} \nonumber \\
    &=   \E_{L_*, \bZ} \left[\E_{\bF} \Big[ \rep(\bF) \mid G_* = \masc, L_*, \bZ \Big]\right] &\text{{\color{gray} (rewrite as an expectation)}} \nonumber
\end{align}
\end{proof}

\clearpage
\newpage
\twocolumn
\bibliographystyle{acl_natbib.bst}
\bibliography{anthology.bib}

\begin{thebibliography}{52}
\expandafter\ifx\csname natexlab\endcsname\relax\def\natexlab#1{#1}\fi

\bibitem[{Adi et~al.(2017)Adi, Kermany, Belinkov, Lavi, and
  Goldberg}]{adi2016fine}
Yossi Adi, Einat Kermany, Yonatan Belinkov, Ofer Lavi, and Yoav Goldberg. 2017.
\newblock \href {https://openreview.net/forum?id=BJh6Ztuxl} {Fine-grained
  analysis of sentence embeddings using auxiliary prediction tasks}.
\newblock In \emph{5th International Conference on Learning Representations
  (Conference Track)}.

\bibitem[{Alain and Bengio(2017)}]{alain2016understanding}
Guillaume Alain and Yoshua Bengio. 2017.
\newblock \href {https://openreview.net/forum?id=HJ4-rAVtl} {Understanding
  intermediate layers using linear classifier probes}.
\newblock In \emph{5th International Conference on Learning Representations
  (Workshop Track)}.

\bibitem[{Bartl et~al.(2020)Bartl, Nissim, and
  Gatt}]{bartl-etal-2020-unmasking}
Marion Bartl, Malvina Nissim, and Albert Gatt. 2020.
\newblock \href {https://aclanthology.org/2020.gebnlp-1.1} {Unmasking
  contextual stereotypes: Measuring and mitigating {BERT}{'}s gender bias}.
\newblock In \emph{Proceedings of the Second Workshop on Gender Bias in Natural
  Language Processing}, pages 1--16, Barcelona, Spain (Online). Association for
  Computational Linguistics.

\bibitem[{Belinkov(2021)}]{belinkov2021probing}
Yonatan Belinkov. 2021.
\newblock \href {https://arxiv.org/abs/2102.12452} {Probing classifiers:
  Promises, shortcomings, and alternatives}.
\newblock \emph{arXiv preprint arXiv:2102.12452}.

\bibitem[{Belinkov et~al.(2017)Belinkov, M{\`a}rquez, Sajjad, Durrani, Dalvi,
  and Glass}]{belinkov-etal-2017-evaluating}
Yonatan Belinkov, Llu{\'\i}s M{\`a}rquez, Hassan Sajjad, Nadir Durrani, Fahim
  Dalvi, and James Glass. 2017.
\newblock \href {https://aclanthology.org/I17-1001} {Evaluating layers of
  representation in neural machine translation on part-of-speech and semantic
  tagging tasks}.
\newblock In \emph{Proceedings of the Eighth International Joint Conference on
  Natural Language Processing (Volume 1: Long Papers)}, pages 1--10, Taipei,
  Taiwan. Asian Federation of Natural Language Processing.

\bibitem[{Biasion et~al.(2020)Biasion, Fabris, Silvello, and
  Susto}]{Biasion2020GenderBI}
Davide Biasion, Alessandro Fabris, Gianmaria Silvello, and Gian~Antonio Susto.
  2020.
\newblock \href {http://ceur-ws.org/Vol-2769/paper_16.pdf} {Gender bias in
  italian word embeddings}.
\newblock In \emph{Proceedings of the Seventh Italian Conference on
  Computational Linguistics CLiC-it 2020: Bologna}.

\bibitem[{Bolukbasi et~al.(2016)Bolukbasi, Chang, Zou, Saligrama, and
  Kalai}]{bolukbasi2016man}
Tolga Bolukbasi, Kai-Wei Chang, James~Y. Zou, Venkatesh Saligrama, and Adam~T.
  Kalai. 2016.
\newblock \href
  {https://proceedings.neurips.cc/paper/2016/file/a486cd07e4ac3d270571622f4f316ec5-Paper.pdf}
  {Man is to computer programmer as woman is to homemaker? {D}ebiasing word
  embeddings}.
\newblock In \emph{Advances in Neural Information Processing Systems},
  volume~29.

\bibitem[{Brown et~al.(2001)Brown, Cai, and DasGupta}]{interval-brown}
Lawrence~D. Brown, T.~Tony Cai, and Anirban DasGupta. 2001.
\newblock \href {https://doi.org/10.1214/ss/1009213286} {{Interval Estimation
  for a Binomial Proportion}}.
\newblock \emph{Statistical Science}, 16(2):101 -- 133.

\bibitem[{Brown et~al.(2020)Brown, Mann, Ryder, Subbiah, Kaplan, Dhariwal,
  Neelakantan, Shyam, Sastry, Askell, Agarwal, Herbert-Voss, Krueger, Henighan,
  Child, Ramesh, Ziegler, Wu, Winter, Hesse, Chen, Sigler, Litwin, Gray, Chess,
  Clark, Berner, McCandlish, Radford, Sutskever, and
  Amodei}]{brown2020language}
Tom Brown, Benjamin Mann, Nick Ryder, Melanie Subbiah, Jared~D Kaplan, Prafulla
  Dhariwal, Arvind Neelakantan, Pranav Shyam, Girish Sastry, Amanda Askell,
  Sandhini Agarwal, Ariel Herbert-Voss, Gretchen Krueger, Tom Henighan, Rewon
  Child, Aditya Ramesh, Daniel Ziegler, Jeffrey Wu, Clemens Winter, Chris
  Hesse, Mark Chen, Eric Sigler, Mateusz Litwin, Scott Gray, Benjamin Chess,
  Jack Clark, Christopher Berner, Sam McCandlish, Alec Radford, Ilya Sutskever,
  and Dario Amodei. 2020.
\newblock \href
  {https://proceedings.neurips.cc/paper/2020/file/1457c0d6bfcb4967418bfb8ac142f64a-Paper.pdf}
  {Language models are few-shot learners}.
\newblock In \emph{Advances in Neural Information Processing Systems},
  volume~33, pages 1877--1901.

\bibitem[{Conneau et~al.(2020)Conneau, Khandelwal, Goyal, Chaudhary, Wenzek,
  Guzm{\'a}n, Grave, Ott, Zettlemoyer, and Stoyanov}]{conneau2019unsupervised}
Alexis Conneau, Kartikay Khandelwal, Naman Goyal, Vishrav Chaudhary, Guillaume
  Wenzek, Francisco Guzm{\'a}n, Edouard Grave, Myle Ott, Luke Zettlemoyer, and
  Veselin Stoyanov. 2020.
\newblock \href {https://doi.org/10.18653/v1/2020.acl-main.747} {Unsupervised
  cross-lingual representation learning at scale}.
\newblock In \emph{Proceedings of the 58th Annual Meeting of the Association
  for Computational Linguistics}, pages 8440--8451, Online. Association for
  Computational Linguistics.

\bibitem[{Conneau et~al.(2018)Conneau, Kruszewski, Lample, Barrault, and
  Baroni}]{conneau-etal-2018-cram}
Alexis Conneau, German Kruszewski, Guillaume Lample, Lo{\"\i}c Barrault, and
  Marco Baroni. 2018.
\newblock \href {https://doi.org/10.18653/v1/P18-1198} {What you can cram into
  a single {\$}{\&}!{\#}* vector: Probing sentence embeddings for linguistic
  properties}.
\newblock In \emph{Proceedings of the 56th Annual Meeting of the Association
  for Computational Linguistics (Volume 1: Long Papers)}, pages 2126--2136,
  Melbourne, Australia. Association for Computational Linguistics.

\bibitem[{Devlin et~al.(2019)Devlin, Chang, Lee, and
  Toutanova}]{devlin-etal-2019-bert}
Jacob Devlin, Ming-Wei Chang, Kenton Lee, and Kristina Toutanova. 2019.
\newblock \href {https://doi.org/10.18653/v1/N19-1423} {{BERT}: Pre-training of
  deep bidirectional transformers for language understanding}.
\newblock In \emph{Proceedings of the 2019 Conference of the North {A}merican
  Chapter of the Association for Computational Linguistics: Human Language
  Technologies, Volume 1 (Long and Short Papers)}, pages 4171--4186,
  Minneapolis, Minnesota. Association for Computational Linguistics.

\bibitem[{Elazar et~al.(2021)Elazar, Ravfogel, Jacovi, and
  Goldberg}]{elazar2020bert}
Yanai Elazar, Shauli Ravfogel, Alon Jacovi, and Yoav Goldberg. 2021.
\newblock \href {https://transacl.org/ojs/index.php/tacl/article/view/2423}
  {Amnesic probing: {B}ehavioral explanation with amnesic counterfactuals}.
\newblock \emph{Transactions of the Association for Computational Linguistics}.

\bibitem[{Ettinger et~al.(2016)Ettinger, Elgohary, and
  Resnik}]{ettinger-etal-2016-probing}
Allyson Ettinger, Ahmed Elgohary, and Philip Resnik. 2016.
\newblock \href {https://doi.org/10.18653/v1/W16-2524} {Probing for semantic
  evidence of composition by means of simple classification tasks}.
\newblock In \emph{Proceedings of the 1st Workshop on Evaluating Vector-Space
  Representations for {NLP}}, pages 134--139, Berlin, Germany. Association for
  Computational Linguistics.

\bibitem[{Feder et~al.(2021)Feder, Oved, Shalit, and
  Reichart}]{feder2020causalm}
Amir Feder, Nadav Oved, Uri Shalit, and Roi Reichart. 2021.
\newblock \href {https://doi.org/10.1162/coli_a_00404} {{Causa{LM}: {C}ausal
  Model Explanation Through Counterfactual Language Models}}.
\newblock \emph{Computational Linguistics}, 47(2):333--386.

\bibitem[{Finlayson et~al.(2021)Finlayson, Mueller, Gehrmann, Shieber, Linzen,
  and Belinkov}]{finlayson-etal-2021-causal}
Matthew Finlayson, Aaron Mueller, Sebastian Gehrmann, Stuart Shieber, Tal
  Linzen, and Yonatan Belinkov. 2021.
\newblock \href {https://doi.org/10.18653/v1/2021.acl-long.144} {Causal
  analysis of syntactic agreement mechanisms in neural language models}.
\newblock In \emph{Proceedings of the 59th Annual Meeting of the Association
  for Computational Linguistics and the 11th International Joint Conference on
  Natural Language Processing (Volume 1: Long Papers)}, pages 1828--1843,
  Online. Association for Computational Linguistics.

\bibitem[{Giulianelli et~al.(2018)Giulianelli, Harding, Mohnert, Hupkes, and
  Zuidema}]{giulianelli2018under}
Mario Giulianelli, Jack Harding, Florian Mohnert, Dieuwke Hupkes, and Willem
  Zuidema. 2018.
\newblock \href {https://doi.org/10.18653/v1/W18-5426} {Under the hood: Using
  diagnostic classifiers to investigate and improve how language models track
  agreement information}.
\newblock In \emph{Proceedings of the 2018 {EMNLP} Workshop {B}lackbox{NLP}:
  Analyzing and Interpreting Neural Networks for {NLP}}, pages 240--248,
  Brussels, Belgium. Association for Computational Linguistics.

\bibitem[{Gonen et~al.(2019)Gonen, Kementchedjhieva, and
  Goldberg}]{gonen-etal-2019-grammatical}
Hila Gonen, Yova Kementchedjhieva, and Yoav Goldberg. 2019.
\newblock \href {https://doi.org/10.18653/v1/K19-1043} {How does grammatical
  gender affect noun representations in gender-marking languages?}
\newblock In \emph{Proceedings of the 23rd Conference on Computational Natural
  Language Learning (CoNLL)}, pages 463--471, Hong Kong, China. Association for
  Computational Linguistics.

\bibitem[{Gonen et~al.(2022)Gonen, Ravfogel, and
  Goldberg}]{gonen-etal-2022-analyzing}
Hila Gonen, Shauli Ravfogel, and Yoav Goldberg. 2022.
\newblock \href {https://doi.org/10.18653/v1/2022.repl4nlp-1.8} {Analyzing
  gender representation in multilingual models}.
\newblock In \emph{Proceedings of the 7th Workshop on Representation Learning
  for NLP}, pages 67--77, Dublin, Ireland. Association for Computational
  Linguistics.

\bibitem[{Gonzalez-Agirre et~al.(2012)Gonzalez-Agirre, Laparra, and
  Rigau}]{Gonzalez-Agirre:Laparra:Rigau:2012}
Aitor Gonzalez-Agirre, Egoitz Laparra, and German Rigau. 2012.
\newblock \href
  {http://www.lrec-conf.org/proceedings/lrec2012/pdf/293_Paper.pdf}
  {Multilingual central repository version 3.0}.
\newblock In \emph{Proceedings of the Eighth International Conference on
  Language Resources and Evaluation ({LREC}'12)}, pages 2525--2529, Istanbul,
  Turkey. European Language Resources Association (ELRA).

\bibitem[{Hern{\'a}n and Robins(2020)}]{hernan2020causal}
MA~Hern{\'a}n and JM~Robins. 2020.
\newblock \href
  {https://www.hsph.harvard.edu/miguel-hernan/causal-inference-book/} {Causal
  inference: What if}.
\newblock \emph{{B}oca {R}aton: Chapman \& {Hall}/{CRC}}.

\bibitem[{Hewitt et~al.(2021)Hewitt, Ethayarajh, Liang, and
  Manning}]{hewitt-etal-2021-conditional}
John Hewitt, Kawin Ethayarajh, Percy Liang, and Christopher Manning. 2021.
\newblock \href {https://doi.org/10.18653/v1/2021.emnlp-main.122} {Conditional
  probing: Measuring usable information beyond a baseline}.
\newblock In \emph{Proceedings of the 2021 Conference on Empirical Methods in
  Natural Language Processing}, pages 1626--1639, Online and Punta Cana,
  Dominican Republic. Association for Computational Linguistics.

\bibitem[{Hewitt and Liang(2019)}]{hewitt-liang-2019-designing}
John Hewitt and Percy Liang. 2019.
\newblock \href {https://doi.org/10.18653/v1/D19-1275} {Designing and
  interpreting probes with control tasks}.
\newblock In \emph{Proceedings of the 2019 Conference on Empirical Methods in
  Natural Language Processing and the 9th International Joint Conference on
  Natural Language Processing (EMNLP-IJCNLP)}, pages 2733--2743, Hong Kong,
  China. Association for Computational Linguistics.

\bibitem[{Hupkes et~al.(2018)Hupkes, Veldhoen, and
  Zuidema}]{hupkes2018visualisation}
Dieuwke Hupkes, Sara Veldhoen, and Willem Zuidema. 2018.
\newblock \href {https://jair.org/index.php/jair/article/view/11196/26408}
  {Visualisation and `diagnostic classifiers' reveal how recurrent and
  recursive neural networks process hierarchical structure}.
\newblock \emph{Journal of Artificial Intelligence Research}, 61:907--926.

\bibitem[{Joshi et~al.(2019)Joshi, Levy, Zettlemoyer, and
  Weld}]{joshi-etal-2019-bert}
Mandar Joshi, Omer Levy, Luke Zettlemoyer, and Daniel Weld. 2019.
\newblock \href {https://doi.org/10.18653/v1/D19-1588} {{BERT} for coreference
  resolution: Baselines and analysis}.
\newblock In \emph{Proceedings of the 2019 Conference on Empirical Methods in
  Natural Language Processing and the 9th International Joint Conference on
  Natural Language Processing (EMNLP-IJCNLP)}, pages 5803--5808, Hong Kong,
  China. Association for Computational Linguistics.

\bibitem[{Kondratyuk(2019)}]{kondratyuk-2019-cross}
Dan Kondratyuk. 2019.
\newblock \href {https://doi.org/10.18653/v1/W19-4203} {Cross-lingual
  lemmatization and morphology tagging with two-stage multilingual {BERT}
  fine-tuning}.
\newblock In \emph{Proceedings of the 16th Workshop on Computational Research
  in Phonetics, Phonology, and Morphology}, pages 12--18, Florence, Italy.
  Association for Computational Linguistics.

\bibitem[{Lasri et~al.(2022)Lasri, Pimentel, Lenci, Poibeau, and
  Cotterell}]{lasri-etal-2022-probing}
Karim Lasri, Tiago Pimentel, Alessandro Lenci, Thierry Poibeau, and Ryan
  Cotterell. 2022.
\newblock \href {https://doi.org/10.18653/v1/2022.acl-long.603} {Probing for
  the usage of grammatical number}.
\newblock In \emph{Proceedings of the 60th Annual Meeting of the Association
  for Computational Linguistics (Volume 1: Long Papers)}, pages 8818--8831,
  Dublin, Ireland. Association for Computational Linguistics.

\bibitem[{Liang et~al.(2020)Liang, Li, Zheng, Lim, Salakhutdinov, and
  Morency}]{liang-etal-2020-towards}
Paul~Pu Liang, Irene~Mengze Li, Emily Zheng, Yao~Chong Lim, Ruslan
  Salakhutdinov, and Louis-Philippe Morency. 2020.
\newblock \href {https://doi.org/10.18653/v1/2020.acl-main.488} {Towards
  debiasing sentence representations}.
\newblock In \emph{Proceedings of the 58th Annual Meeting of the Association
  for Computational Linguistics}, pages 5502--5515, Online. Association for
  Computational Linguistics.

\bibitem[{McDonald et~al.(2013)McDonald, Nivre, Quirmbach-Brundage, Goldberg,
  Das, Ganchev, Hall, Petrov, Zhang, T{\"a}ckstr{\"o}m, Bedini,
  Bertomeu~Castell{\'o}, and Lee}]{mcdonald-etal-2013-universal}
Ryan McDonald, Joakim Nivre, Yvonne Quirmbach-Brundage, Yoav Goldberg, Dipanjan
  Das, Kuzman Ganchev, Keith Hall, Slav Petrov, Hao Zhang, Oscar
  T{\"a}ckstr{\"o}m, Claudia Bedini, N{\'u}ria Bertomeu~Castell{\'o}, and
  Jungmee Lee. 2013.
\newblock \href {https://aclanthology.org/P13-2017} {{U}niversal {D}ependency
  annotation for multilingual parsing}.
\newblock In \emph{Proceedings of the 51st Annual Meeting of the Association
  for Computational Linguistics (Volume 2: Short Papers)}, pages 92--97, Sofia,
  Bulgaria. Association for Computational Linguistics.

\bibitem[{Nivre et~al.(2020)Nivre, de~Marneffe, Ginter, Haji{\v{c}}, Manning,
  Pyysalo, Schuster, Tyers, and Zeman}]{nivre-etal-2020-universal}
Joakim Nivre, Marie-Catherine de~Marneffe, Filip Ginter, Jan Haji{\v{c}},
  Christopher~D. Manning, Sampo Pyysalo, Sebastian Schuster, Francis Tyers, and
  Daniel Zeman. 2020.
\newblock \href {https://aclanthology.org/2020.lrec-1.497} {{U}niversal
  {D}ependencies v2: An evergrowing multilingual treebank collection}.
\newblock In \emph{Proceedings of the 12th Language Resources and Evaluation
  Conference}, pages 4034--4043, Marseille, France. European Language Resources
  Association.

\bibitem[{Pearl(2009)}]{pearl2009causal}
Judea Pearl. 2009.
\newblock \href {https://ftp.cs.ucla.edu/pub/stat_ser/r350.pdf} {Causal
  inference in statistics: {A}n overview}.
\newblock \emph{Statistics Surveys}, 3:96--146.

\bibitem[{Pimentel and Cotterell(2021)}]{pimentel-cotterell-2021-bayesian}
Tiago Pimentel and Ryan Cotterell. 2021.
\newblock \href {https://doi.org/10.18653/v1/2021.emnlp-main.229} {A {B}ayesian
  framework for information-theoretic probing}.
\newblock In \emph{Proceedings of the 2021 Conference on Empirical Methods in
  Natural Language Processing}, pages 2869--2887, Online and Punta Cana,
  Dominican Republic. Association for Computational Linguistics.

\bibitem[{Pimentel et~al.(2020{\natexlab{a}})Pimentel, Saphra, Williams, and
  Cotterell}]{pimentel-etal-2020-pareto}
Tiago Pimentel, Naomi Saphra, Adina Williams, and Ryan Cotterell.
  2020{\natexlab{a}}.
\newblock \href {https://doi.org/10.18653/v1/2020.emnlp-main.254} {{P}areto
  probing: {T}rading off accuracy for complexity}.
\newblock In \emph{Proceedings of the 2020 Conference on Empirical Methods in
  Natural Language Processing (EMNLP)}, pages 3138--3153, Online. Association
  for Computational Linguistics.

\bibitem[{Pimentel et~al.(2020{\natexlab{b}})Pimentel, Valvoda, Hall~Maudslay,
  Zmigrod, Williams, and Cotterell}]{pimentel-etal-2020-information}
Tiago Pimentel, Josef Valvoda, Rowan Hall~Maudslay, Ran Zmigrod, Adina
  Williams, and Ryan Cotterell. 2020{\natexlab{b}}.
\newblock \href {https://doi.org/10.18653/v1/2020.acl-main.420}
  {Information-theoretic probing for linguistic structure}.
\newblock In \emph{Proceedings of the 58th Annual Meeting of the Association
  for Computational Linguistics}, pages 4609--4622, Online. Association for
  Computational Linguistics.

\bibitem[{Radford et~al.(2019)Radford, Wu, Child, Luan, Amodei, and
  Sutskever}]{radford2019language}
Alec Radford, Jeffrey Wu, Rewon Child, David Luan, Dario Amodei, and Ilya
  Sutskever. 2019.
\newblock \href
  {https://d4mucfpksywv.cloudfront.net/better-language-models/language_models_are_unsupervised_multitask_learners.pdf}
  {Language models are unsupervised multitask learners}.
\newblock \emph{OpenAI blog}, 1(8):9.

\bibitem[{Ravfogel et~al.(2020)Ravfogel, Elazar, Gonen, Twiton, and
  Goldberg}]{ravfogel-etal-2020-null}
Shauli Ravfogel, Yanai Elazar, Hila Gonen, Michael Twiton, and Yoav Goldberg.
  2020.
\newblock \href {https://doi.org/10.18653/v1/2020.acl-main.647} {Null it out:
  Guarding protected attributes by iterative nullspace projection}.
\newblock In \emph{Proceedings of the 58th Annual Meeting of the Association
  for Computational Linguistics}, pages 7237--7256, Online. Association for
  Computational Linguistics.

\bibitem[{Ravfogel et~al.(2021)Ravfogel, Prasad, Linzen, and
  Goldberg}]{ravfogel-counter}
Shauli Ravfogel, Grusha Prasad, Tal Linzen, and Yoav Goldberg. 2021.
\newblock \href {https://doi.org/10.18653/v1/2021.conll-1.15} {Counterfactual
  interventions reveal the causal effect of relative clause representations on
  agreement prediction}.
\newblock In \emph{Proceedings of the 25th Conference on Computational Natural
  Language Learning}, pages 194--209, Online. Association for Computational
  Linguistics.

\bibitem[{Ravfogel et~al.(2022{\natexlab{a}})Ravfogel, Twiton, Goldberg, and
  Cotterell}]{ravfogel+al.icml22}
Shauli Ravfogel, Michael Twiton, Yoav Goldberg, and Ryan Cotterell.
  2022{\natexlab{a}}.
\newblock \href {https://arxiv.org/pdf/2201.12091.pdf} {Linear relaxed
  adversarial concept erasure}.

\bibitem[{Ravfogel et~al.(2022{\natexlab{b}})Ravfogel, Vargas, Goldberg, and
  Cotterell}]{ravfogel2022adversarial}
Shauli Ravfogel, Francisco Vargas, Yoav Goldberg, and Ryan Cotterell.
  2022{\natexlab{b}}.
\newblock \href {https://arxiv.org/abs/2201.12191} {Adversarial concept erasure
  in kernel space}.

\bibitem[{Ravichander et~al.(2021)Ravichander, Belinkov, and
  Hovy}]{ravichander-etal-2021-probing}
Abhilasha Ravichander, Yonatan Belinkov, and Eduard Hovy. 2021.
\newblock \href {https://aclanthology.org/2021.eacl-main.295} {Probing the
  probing paradigm: Does probing accuracy entail task relevance?}
\newblock In \emph{Proceedings of the 16th Conference of the European Chapter
  of the Association for Computational Linguistics: Main Volume}, pages
  3363--3377, Online. Association for Computational Linguistics.

\bibitem[{Rosenbaum and Rubin(1983)}]{rosenbaum1983central}
Paul~R. Rosenbaum and Donald~B. Rubin. 1983.
\newblock \href
  {https://www.jstor.org/stable/2335942?seq=1#metadata_info_tab_contents} {The
  central role of the propensity score in observational studies for causal
  effects}.
\newblock \emph{Biometrika}, 70(1):41--55.

\bibitem[{Rudinger et~al.(2018)Rudinger, Naradowsky, Leonard, and
  Van~Durme}]{winogender}
Rachel Rudinger, Jason Naradowsky, Brian Leonard, and Benjamin Van~Durme. 2018.
\newblock \href {https://doi.org/10.18653/v1/N18-2002} {Gender bias in
  coreference resolution}.
\newblock In \emph{Proceedings of the 2018 Conference of the North {A}merican
  Chapter of the Association for Computational Linguistics: Human Language
  Technologies, Volume 2 (Short Papers)}, pages 8--14, New Orleans, Louisiana.
  Association for Computational Linguistics.

\bibitem[{Saphra and Lopez(2019)}]{saphra-lopez-2019-understanding}
Naomi Saphra and Adam Lopez. 2019.
\newblock \href {https://doi.org/10.18653/v1/N19-1329} {Understanding learning
  dynamics of language models with {SVCCA}}.
\newblock In \emph{Proceedings of the 2019 Conference of the North {A}merican
  Chapter of the Association for Computational Linguistics: Human Language
  Technologies, Volume 1 (Long and Short Papers)}, pages 3257--3267,
  Minneapolis, Minnesota. Association for Computational Linguistics.

\bibitem[{Stanovsky et~al.(2019)Stanovsky, Smith, and
  Zettlemoyer}]{stanovsky-etal-2019-evaluating}
Gabriel Stanovsky, Noah~A. Smith, and Luke Zettlemoyer. 2019.
\newblock \href {https://doi.org/10.18653/v1/P19-1164} {Evaluating gender bias
  in machine translation}.
\newblock In \emph{Proceedings of the 57th Annual Meeting of the Association
  for Computational Linguistics}, pages 1679--1684, Florence, Italy.
  Association for Computational Linguistics.

\bibitem[{Taul{\'e} et~al.(2008)Taul{\'e}, Mart{\'\i}, and
  Recasens}]{taule-etal-2008-ancora}
Mariona Taul{\'e}, M.~Ant{\`o}nia Mart{\'\i}, and Marta Recasens. 2008.
\newblock \href
  {http://www.lrec-conf.org/proceedings/lrec2008/pdf/35_paper.pdf} {{A}n{C}ora:
  Multilevel annotated corpora for {C}atalan and {S}panish}.
\newblock In \emph{Proceedings of the Sixth International Conference on
  Language Resources and Evaluation ({LREC}'08)}, Marrakech, Morocco. European
  Language Resources Association (ELRA).

\bibitem[{Tucker et~al.(2021)Tucker, Qian, and Levy}]{tucker2021if}
Mycal Tucker, Peng Qian, and Roger Levy. 2021.
\newblock \href {https://doi.org/10.18653/v1/2021.findings-acl.76} {What if
  this modified that? syntactic interventions with counterfactual embeddings}.
\newblock In \emph{Findings of the Association for Computational Linguistics:
  ACL-IJCNLP 2021}, pages 862--875, Online. Association for Computational
  Linguistics.

\bibitem[{Vig et~al.(2020)Vig, Gehrmann, Belinkov, Qian, Nevo, Singer, and
  Shieber}]{vig2020investigating}
Jesse Vig, Sebastian Gehrmann, Yonatan Belinkov, Sharon Qian, Daniel Nevo,
  Yaron Singer, and Stuart Shieber. 2020.
\newblock \href
  {https://proceedings.neurips.cc/paper/2020/file/92650b2e92217715fe312e6fa7b90d82-Paper.pdf}
  {Investigating gender bias in language models using causal mediation
  analysis}.
\newblock In \emph{Advances in Neural Information Processing Systems},
  volume~33, pages 12388--12401.

\bibitem[{Voita and Titov(2020)}]{voita-titov-2020-information}
Elena Voita and Ivan Titov. 2020.
\newblock \href {https://doi.org/10.18653/v1/2020.emnlp-main.14}
  {Information-theoretic probing with minimum description length}.
\newblock In \emph{Proceedings of the 2020 Conference on Empirical Methods in
  Natural Language Processing (EMNLP)}, pages 183--196, Online. Association for
  Computational Linguistics.

\bibitem[{Wolf et~al.(2020)Wolf, Debut, Sanh, Chaumond, Delangue, Moi, Cistac,
  Rault, Louf, Funtowicz, Davison, Shleifer, von Platen, Ma, Jernite, Plu, Xu,
  Scao, Gugger, Drame, Lhoest, and Rush}]{wolf-etal-2020-transformers}
Thomas Wolf, Lysandre Debut, Victor Sanh, Julien Chaumond, Clement Delangue,
  Anthony Moi, Pierric Cistac, Tim Rault, Rémi Louf, Morgan Funtowicz, Joe
  Davison, Sam Shleifer, Patrick von Platen, Clara Ma, Yacine Jernite, Julien
  Plu, Canwen Xu, Teven~Le Scao, Sylvain Gugger, Mariama Drame, Quentin Lhoest,
  and Alexander~M. Rush. 2020.
\newblock \href {https://www.aclweb.org/anthology/2020.emnlp-demos.6}
  {Transformers: State-of-the-art natural language processing}.
\newblock In \emph{Proceedings of the 2020 Conference on Empirical Methods in
  Natural Language Processing: System Demonstrations}, pages 38--45, Online.
  Association for Computational Linguistics.

\bibitem[{Zellers et~al.(2019)Zellers, Holtzman, Bisk, Farhadi, and
  Choi}]{zellers2019hellaswag}
Rowan Zellers, Ari Holtzman, Yonatan Bisk, Ali Farhadi, and Yejin Choi. 2019.
\newblock \href {https://doi.org/10.18653/v1/P19-1472} {{H}ella{S}wag: Can a
  machine really finish your sentence?}
\newblock In \emph{Proceedings of the 57th Annual Meeting of the Association
  for Computational Linguistics}, pages 4791--4800, Florence, Italy.
  Association for Computational Linguistics.

\bibitem[{Zhao et~al.(2018)Zhao, Wang, Yatskar, Ordonez, and Chang}]{winobias}
Jieyu Zhao, Tianlu Wang, Mark Yatskar, Vicente Ordonez, and Kai-Wei Chang.
  2018.
\newblock \href {https://doi.org/10.18653/v1/N18-2003} {Gender bias in
  coreference resolution: Evaluation and debiasing methods}.
\newblock In \emph{Proceedings of the 2018 Conference of the North {A}merican
  Chapter of the Association for Computational Linguistics: Human Language
  Technologies, Volume 2 (Short Papers)}, pages 15--20, New Orleans, Louisiana.
  Association for Computational Linguistics.

\bibitem[{Zmigrod et~al.(2019)Zmigrod, Mielke, Wallach, and
  Cotterell}]{zmigrod-etal-2019-counterfactual}
Ran Zmigrod, Sabrina~J. Mielke, Hanna Wallach, and Ryan Cotterell. 2019.
\newblock \href {https://doi.org/10.18653/v1/P19-1161} {Counterfactual data
  augmentation for mitigating gender stereotypes in languages with rich
  morphology}.
\newblock In \emph{Proceedings of the 57th Annual Meeting of the Association
  for Computational Linguistics}, pages 1651--1661, Florence, Italy.
  Association for Computational Linguistics.

\end{thebibliography}

\end{document}